\documentclass{article} 
\usepackage[preprint]{colm2026_conference}
\usepackage{xcolor}   
\usepackage{microtype}
\usepackage{graphicx}
\usepackage{subfig}
\usepackage{booktabs} 
\usepackage[pagebackref]{hyperref}
\usepackage{url}
\usepackage{comment}
\usepackage{amsmath}
\usepackage{amssymb}
\usepackage{mathtools}
\usepackage{amsthm}
\usepackage{bm}
\usepackage{tcolorbox}
\usepackage{soul}
\usepackage{color}
\usepackage[textsize=tiny]{todonotes}
\hypersetup{
    colorlinks=true,
    citecolor =cyan,
    linkcolor=magenta,
    urlcolor=blue,
}
\usepackage{lineno}
\definecolor{myframe}{HTML}{E9EFFD}
\definecolor{myback}{HTML}{F0F5F6}

\usepackage{microtype}      
\usepackage{graphicx}
\usepackage{booktabs}       
\usepackage[ruled,algo2e]{algorithm2e}
\usepackage{algorithm}
\usepackage{comment}
\usepackage{nicefrac} 

\usepackage{amsmath}
\usepackage{amssymb}
\usepackage{mathtools}
\usepackage{amsthm}
\usepackage{comment}
\usepackage{cleveref}
\usepackage{minitoc}
\theoremstyle{plain}
\newtheorem{theorem}{Theorem}[section]
\newtheorem{proposition}[theorem]{Proposition}

\theoremstyle{definition}

\theoremstyle{remark}

\usepackage[textsize=tiny]{todonotes}
\usepackage[labelfont=it,font+=smaller]{caption}
\usepackage[labelfont=it]{subcaption}
\usepackage{makecell}
\usepackage{arydshln}
\usepackage{multirow}
\usepackage{soul}
\usepackage{color}

\definecolor{lightblue}{rgb}{0.7725490196, 0.89019607843, 0.9294117647}
\definecolor{darkblue}{rgb}{0.11764705882, 0.11764705882, 0.54509803921}
\usepackage{rotating}

\usepackage{xspace}
\usepackage{bbm}
\input{mathlig}
\usepackage{mathtools}
\usepackage{relsize}
\usepackage{mathrsfs}
\usepackage{dsfont}
\DeclarePairedDelimiterX{\inp}[2]{\langle}{\rangle}{#1, #2}
\makeatletter
\newcommand*\bigcdot{\mathpalette\bigcdot@{.5}}
\newcommand*\bigcdot@[2]{\mathbin{\vcenter{\hbox{\scalebox{#2}{$\m@th#1\bullet$}}}}}
\makeatother

\newcommand{\muspace}{\mspace{1mu}}

\DeclareRobustCommand{\scond}{\mathchoice{\muspace\vert\muspace}{\vert}{\vert}{\vert}}
\mathlig{|}{\scond}

\DeclareRobustCommand{\discint}{\mathchoice{\mspace{-1.5mu}:\mspace{-1.5mu}}{\mspace{-1.5mu}:\mspace{-1.5mu}}{:}{:}}
\mathlig{::}{\discint}
\newcommand{\suchthat}{\mathchoice{\colon}{\colon}{:\mspace{1mu}}{:}}

\newcommand{\Cc}{\mathcal{C}}

\newcommand{\Mc}{\mathcal{M}}

\newcommand{\Tc}{\mathcal{T}}

\newcommand{\Xc}{\mathcal{X}}

\newcommand{\ob}{{\mathbf o}}

\DeclareMathOperator\E{\mathsf{E}}

\def\textiid{i.i.d.\@\xspace}
\newcommand\iid{\ifmmode\text{ i.i.d. } \else \textiid \fi}

\def\mathllap{\mathpalette\mathllapinternal}
\def\mathllapinternal#1#2{%
  \llap{$\mathsurround=0pt#1{#2}$}}

\def\clap#1{\hbox to 0pt{\hss#1\hss}}
\def\mathclap{\mathpalette\mathclapinternal}
\def\mathclapinternal#1#2{%
  \clap{$\mathsurround=0pt#1{#2}$}}

\let\oldstackrel\stackrel
\renewcommand{\stackrel}[2]{\oldstackrel{\mathclap{#1}}{#2}}

\DeclarePairedDelimiterX{\infdivx}[2]{(}{)}{%
  #1\;\delimsize\|\;#2%
}

\renewcommand{\hbar}{h\mathllap{\overline{\vphantom{h}\hphantom{\rule{4.6pt}{0pt}}}\mspace{0.77mu}}}

\catcode`~=11 
\newcommand{\urltilde}{\kern -.06em\lower -.06em\hbox{~}\kern .02em}
\catcode`~=13 

\DeclarePairedDelimiterX{\norm}[1]{\lVert}{\rVert}{#1}
\DeclarePairedDelimiterX{\abs}[1]{\lvert}{\rvert}{#1}

\usepackage{xparse}

\let\oldpartial\partial
\renewcommand*{\partial}{\mathop{}\!\oldpartial}
\newcommand{\defeq}{\mathrel{\mathop{:}}=}

\colorlet{tablerowcolor}{lightblue}

\renewcommand{\E}{\mathbb{E}}

\usepackage{enumitem}

\usepackage{dashrule}

\usepackage{tcolorbox}\tcbuselibrary{skins}
\tcbuselibrary{breakable}

\tcbset{gray_box/.style={
    enhanced,
    breakable,
    size=fbox,
    boxrule=2pt,
    arc=2mm,
    auto outer arc,
    left=1pt,
    right=1pt,
    top=1pt,
    bottom=1pt,
}}

\tcbset{red_box/.style={
    enhanced,
    size=fbox,
    boxrule=2pt,
    arc=2mm,
    auto outer arc,
    left=1pt,
    right=1pt,
    top=1pt,
    bottom=1pt,
    colback=red!10,
    colframe=red!75!black,
    coltitle=white, 
}}

\tcbset{
    brown_box/.style={
        enhanced,
        colback=brown!10,
        colframe=brown!80!black,
        coltitle=white,
        arc=2mm,
        auto outer arc,
        left=1pt,
        right=1pt,
        top=1pt,
        bottom=1pt,
        boxrule=2pt,
    }
}

\definecolor{green_color}{RGB}{0, 150, 0}
\tcbset{green_box/.style={
    enhanced,
    breakable,
    size=fbox,
    boxrule=2pt,
    arc=2mm,
    auto outer arc,
    left=1pt,
    right=1pt,
    top=1pt,
    bottom=1pt,
    colback=green!8,
    colframe=green_color,
    coltitle=white, 
}}

\definecolor{blue_color}{RGB}{8, 104, 172}
\tcbset{blue_box/.style={
    enhanced,
    breakable,
    size=fbox,
    boxrule=2pt,
    arc=2mm,
    auto outer arc,
    left=1pt,
    right=1pt,
    top=1pt,
    bottom=1pt,
    colback=blue!5,
    colframe=blue_color,
    coltitle=white, 
}}

\RenewDocumentCommand\eqref{D<>{Eq.}om}{%
\IfNoValueTF{#2}
{#1~\oldeqref{#3}}
{(#2 #1~\textup{\ref{#3}})}%
}

\usepackage{scrwfile}
\TOCclone[\appendixname]{toc}{atoc}
\newcommand\StartAppendixEntries{}
\AfterTOCHead[toc]{%
  \renewcommand\StartAppendixEntries{\value{tocdepth}=-10000\relax}%
}
\AfterTOCHead[atoc]{%
  \edef\maintocdepth{\the\value{tocdepth}}%
  \value{tocdepth}=-10000\relax%
  \renewcommand\StartAppendixEntries{\value{tocdepth}=\maintocdepth\relax}%
}

\usepackage{booktabs}
\usepackage{siunitx}
\usepackage[strings]{underscore}

\title{Decocted Experience Improves Test-Time Inference\\ in LLM Agents}

\author{Maohao Shen\textsuperscript{1}\thanks{Co-first Authors.}\ ,\ 
Kaiwen Zha\textsuperscript{1}\footnotemark[1]\ ,\ 
Zexue He\textsuperscript{2}\footnotemark[1], \\
\bf{Zhang-Wei Hong\textsuperscript{3}\ ,\ 
Siru Ouyang\textsuperscript{4}\ ,\ 
J. Jon Ryu\textsuperscript{1}}, \\
\bf{Prasanna Sattigeri\textsuperscript{3}\ ,\ 
Suhas Diggavi\textsuperscript{5}\ ,\ 
Gregory Wornell\textsuperscript{1}} \\ \\
Massachusetts Institute of Technology\textsuperscript{1},
Stanford University\textsuperscript{2}, \\
MIT-IBM Watson AI Lab\textsuperscript{3},
University of Illinois at Urbana-Champaign\textsuperscript{4}, \\
University of California, Los Angeles\textsuperscript{5}\\
\textit{maohao@mit.edu, kzha@mit.edu, zexueh@stanford.edu}
}

\begin{document}

\ifcolmsubmission
\linenumbers
\fi

\maketitle

\begin{abstract}
There is growing interest in improving LLMs without updating model parameters. One well-established direction is test-time scaling, where increased inference-time computation (e.g., longer reasoning, sampling, or search) is used to improve performance. However, for complex reasoning and agentic tasks, naively scaling test-time compute can substantially increase cost and still lead to wasted budget on suboptimal exploration. In this paper, we explore \emph{context} as a complementary scaling axis for improving LLM performance, and systematically study how to construct better inputs that guide reasoning through \emph{experience}. We show that effective context construction critically depends on \emph{decocted experience}. We present a detailed analysis of experience-augmented agents, studying how to derive context from experience, how performance scales with accumulated experience, what characterizes good context, and which data structures best support context construction. We identify \emph{decocted experience} as a key mechanism for effective context construction: extracting essence from experience, organizing it coherently, and retrieving salient information to build effective context. We validate our findings across reasoning and agentic tasks, including math reasoning, web browsing, and software engineering.
\end{abstract}

\section{Introduction}
Recent progress in large language models (LLMs) has increasingly shifted attention from optimizing model parameters to improving inference-time behavior. A growing body of work shows that performance can often be improved through test-time scaling \citep{wu2024inference, snell2025scaling}. This shift is especially important for agentic systems, where an LLM must reason over long horizons, interact with tools or environments, and make a sequence of compounding decisions. In such settings, simply allocating more test-time compute is often not sufficient: extensive interaction can increase cost, and still spend substantial budget exploring sub-optimal paths~\citep{shen2025thinking, zeng2025satori}.

In this paper, we argue that \emph{context} constitutes another important axis for efficient scaling. Rather than increasing the output budget, one can instead provide the agent with carefully constructed inputs (i.e., context) to induce more efficient reasoning.
While this perspective is intuitively appealing, there has been limited systematic study on how to construct effective context, what characterizes a good context, and how performance scales with respect to it~\citep{mei2025survey, hua2025context}.
The most straightforward approach to construct context would be to handcraft prompts or demonstrations for a given task, but it is neither scalable nor reliable: handcrafted context must be redesigned for each new task, and it may fail to align with the actual strengths and failure modes of the deployed agent. 

A more natural source of high-quality context is \emph{experience}, e.g., through interaction with environments, an agent accumulates trajectories and feedback that contain reusable strategies, workflows, and pitfalls to avoid. Prior work has shown that such prior interaction traces can improve test-time performance~\citep{suzgun2025dynamic, ouyang2025reasoningbank, wei2025evomemory}. However, most existing methods focus on specific frameworks for leveraging experience, while a systematic understanding of how raw experience should be transformed into effective context remains limited. It remains unclear \textbf{(1) how performance scales with accumulated experience}, \textbf{(2) what characterizes effective context theoretically and empirically}, and \textbf{(3) what data structures best support context construction}. We address this gap by identifying \emph{decocted experience} as a key ingredient for successful experience-augmented agents: extracting the essence from experience, organizing it coherently, and retrieving salient information to construct effective context. Our main findings are as follows:
\begin{enumerate}[leftmargin=*]
\vspace{-0.5em}
\item In Section~\ref{sec:scaling_behavior}, we study what makes experience-augmented agents scalable along two coupled axes: input context size and accumulated experience size. We show that experience decoction through lesson distillation outperforms raw experience in agentic tasks (Figure~\ref{fig:experience_lesson}) and yields better input scalability (Figure~\ref{fig:context_scaling}). We further show that maintaining accumulated experience as memory is critical for scalability: performance exhibits a sweet spot at intermediate memory sizes that can outperform full memory (Figure~\ref{fig:memory_consolidation}), identifying memory consolidation as a second key factor alongside experience decoction.

\item In Section~\ref{sec:analysis}, we investigate how to characterize effective context. Theoretically, we first show that context with higher information gain about the agent's generated output leads to more efficient inference (Proposition~\ref{prop:efficiency} \& Figure~\ref{fig:empirical_validation}). Empirically, we show that effective context should balance the trade-off between relevance and diversity, and that this trade-off is associated with the agent's performance improvement (Figure~\ref{fig:context_quality}).

\item In Section~\ref{sec:concept_tree}, we further study a memory structure tailored for experience-augmented agents. We show that extracting concepts from experience and organizing them into a tree structure across different concept groups improves context construction by encouraging retrieval from more diverse yet still relevant concept groups, leading to better performance than baseline methods (Figure~\ref{fig:concept_tree_webshop}).
\end{enumerate}
\section{Problem Setup}
\label{sec:formulation}
We study how an agent should \emph{craft context from experience} to efficiently solve new tasks. 

\begin{figure}[!t]
\centering
\includegraphics[width=0.93\textwidth]{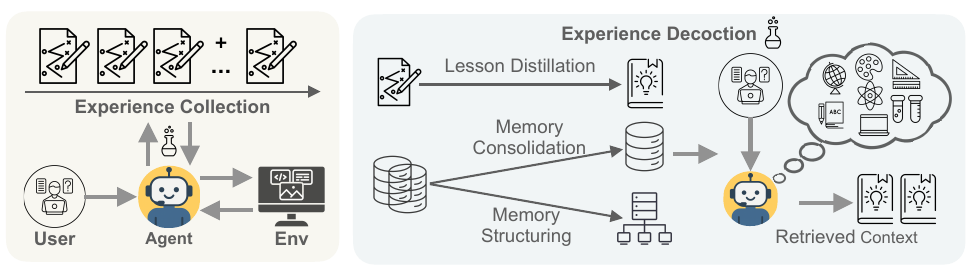}
\caption{\textbf{Experience-Augmented Agent.} The agent accumulates experience from past interactions, decocts it into effective context for improved inference at test time, i.e., distilling lessons from experience, organizing the experience memory, and finally retrieving salient information from it.}
\label{fig:teaser}
\vspace{-1.5em}
\end{figure}

\textbf{Agent.}
Throughout, we assume that an agent is built upon a frozen LLM $\pi_\theta : \mathcal{C} \to \mathcal{Y}$, where $\theta$ denotes pretrained parameters and $\mathcal{C}$ is the space of textual contexts. 
Given a constructed context $c \in \mathcal{C}$, the model generates $\hat{y} = \pi_\theta(c)$, where $\hat{y}$ can either be a single-turn chain-of-thought (CoT) trace or a trajectory 
$\hat{y} = (s_1, a_1, \dots, s_T, a_T)$, where $s_t$ denotes an intermediate state (e.g., environment state) and $a_t$ denotes an action. 

\textbf{Training: Memory Construction from Experience.}
The goal of this paper is to study how to construct an informative yet succinct context based on \emph{experience}, which is formally defined as follows, and
we assume that the agent has access to a set of $N$ problems $\{x_i\}_{i=1}^N$, where, for each $i$, $m$ pairs of trajectory and final reward $\ob_i\defeq\{(y_i^{(j)},r_i^{(j)})\}_{j=1}^m$ are given collectively as \emph{experience}. The experience may not be necessarily from the same agent, but all the experiments in this manuscript assume self-experience. 
We assume the same $m$ for different problems for simplicity.
Given a formatting function $\Psi$, we define $z_i \defeq \Psi(x_i, \ob_i)$ for each $i\in[N]$, and call ${\Mc}=\{z_i\}_{i=1}^N$ the \emph{raw memory}. Since the size of the raw memory grows linearly in $N$, the number of previous examples, and thus it may become inefficient to maintain the entire raw memory at test time.
Instead, we consider a memory organization mechanism $\mu$ that \emph{organizes} ${\Mc}$ into a more succinct memory $\tilde{\Mc}\defeq \mu({\Mc})$.

\textbf{Test-time: Inference with Memory.}
Having built the organized memory $\tilde{\Mc}$,
given a new query $x \in \Xc$, we consider a retrieval mechanism 
$R : \mathcal{X} \times \tilde{\Mc} \to 2^{\tilde{\Mc}}$ which selects a subset of $K$ memory entries relevant to $x$: $R(x; \tilde{\Mc}) = \{ z_{i_1}, \dots, z_{i_K} \}$.
Here, $K\ge 1$ is a hyperparameter.
A context constructor $C : \mathcal{X} \times 2^{\tilde{\Mc}} \to \mathcal{C}$ then produces a context $c = C(x, R(x;\tilde{\Mc}))$, which is provided to the agent as a prompt for inference, i.e., $\hat{y} = \pi_\theta(c)$. Throughout the paper, we assume a simple concatenation-based construction.

\textbf{Scope of Analysis.}
Given an agent $\pi_\theta$ with past experience $\{(x_i, \ob_i)\}_{i=1}^N$, several quantities in the system may grow large in practice, including the number of trajectories per problem (i.e., $m$), the amount of accumulated experience (i.e., $N$), and the size of the retrieved context (i.e., $K$). Our goal is to understand what makes such a system scalable through \emph{experience decoction}: how raw experience should be distilled, organized, and retrieved to construct the context so as to maximize the agent’s expected performance on unseen test problems.

\subsection{A Minimal Instantiation of the Experience-Augmented Agent} \label{sec:instantiation}
To understand the importance of experience decoction, we begin with a minimal instantiation of the experience-augmented agents that do not rely on sophisticated design choices: \textbf{(1) Memory Organization}: We begin with a flat memory and retain only one successful trajectory out of the $m$ trajectories for each entry in order to reduce the retrieved context length, i.e., $\tilde{\Mc} = \{(x_i, y_i)\}_{i=1}^N$. \textbf{(2) Retrieval}: Given a query problem $x$ at test time, we compute embeddings using a fixed encoder $\pi_e : \mathcal{X} \to \mathbb{R}^d$ and define a similarity function
$s(x, x_i) = \langle \pi_e(x), \pi_e(x_i) \rangle$, where $\langle\cdot,\cdot\rangle$ denotes the standard inner product. 
The retrieval mechanism selects the top-$K$ entries according to similarity:
$R(x, \tilde\Mc)=\operatorname{TopK}_{z_i \in \tilde\Mc}
\; s(x, x_i).$ Our analyses in Section~\ref{sec:scaling_behavior} and Section~\ref{sec:analysis} are based on this flat experience memory and basic retriever, while in Section~\ref{sec:concept_tree} we further investigate a more advanced organization system.

\subsection{Experimental Setup} \label{subsec:setup}
\textbf{Tasks and Datasets.}
We study three representative tasks: \textbf{Mathematical Reasoning (Math Reasoning)}, \textbf{Web Browsing (WebShop)}, and \textbf{Software Engineering (SWE)}. For math reasoning, we build memory from DAPO-Math~\citep{yu2025dapo} and evaluate on AMC, AIME, HMMT, and BeyondAIME using exact-match correctness. For WebShop, we use WebShop~\citep{yao2022webshop}, where an agent searches an online shopping environment to satisfy user requests, and performance is measured by the quality of the final purchase. For SWE, we use SWE-bench~\citep{jimenez2023swe}, where an agent resolves real GitHub issues by editing code in a repository environment, and success is determined by whether the generated patch passes the evaluation tests. More details are included in Appendix~\ref{app:datasets}.

\textbf{Models.}
We use Seed-OSS-36B-Instruct~\citep{seed2025seed-oss} as the base language model $\pi_\theta$ in our main experiments. For reasoning tasks, the model generates a single-turn chain-of-thought (CoT) trace followed by a final answer. For agentic tasks, it is equipped with the required external tools and operates in an iterative action--observation loop until termination. We also extend our analysis to another open-source model, GPT-OSS-20B~\citep{openai2025gptoss120bgptoss20bmodel}. We use Qwen3-Embedding-4B~\citep{qwen3embedding} as the embedding model $\pi_e$.

\textbf{Evaluation.}
We evaluate how leveraging context improves both the effectiveness and efficiency of the LLM agent under a fixed test-time budget. We use two metrics: (1) \textbf{Effectiveness:} For a query $x$, we measure the average quality of multiple attempts as $\mathrm{avg}\_m = \frac{1}{m} \sum_{j=1}^{m} r^{(j)}$. (2) \textbf{Efficiency:} For reasoning tasks (e.g., Math Reasoning), we measure efficiency by the number of generated output tokens (i.e., the CoT length). For agentic tasks (e.g., WebShop and SWE), we use the number of interaction steps $T$.

\section{What Makes Experience-Augmented Agent Scalable?} 
\label{sec:scaling_behavior}
As discussed in Section~\ref{sec:formulation}, an experience-augmented agent faces scaling challenges along two axes: the input context can become large since retrieved experiences can be lengthy, and the memory itself can grow substantially as experience accumulates. In this section, we argue that \emph{experience decoction} is the key to scalability: locally, it distills raw experience into concise lessons, and globally, it consolidates the memory by removing redundancy.

\subsection{Input Size Scaling: How to Construct Effective Context?}
\label{subsec:context}
\textbf{Distilled Lessons Are More Effective Than Raw Experience in Agentic Tasks.}
Given a raw experience memory, i.e., $\tilde{\Mc}=\Mc=\{(x_i,\ob_i)\}_{i=1}^N$, where $\ob_i\defeq \{(y_i^{(j)},r_i^{(j)})\}_{j=1}^{m}$, directly using raw experience as context can be sub-optimal due to lengthy trajectories and large $m$. This motivates a compression mechanism that we refer to as \emph{lesson distillation}. Specifically, past trajectories are distilled into a single reusable lesson that summarizes transferable reasoning patterns: $\ell_i = \pi_{\theta}(x_i, \ob_i)$, and the distillation is performed by the agent itself. The resulting context becomes $c = \text{Concat}\big(x, \{(x_{i}, \ell_{i})\}_{i=1}^K\big)$.  To evaluate this idea, for a fixed $K$, we compare a system built on distilled lessons, i.e., $\{(x_i,\ell_i)\}_{i=1}^N$, against the raw experience-based minimal instantiation in Section~\ref{sec:instantiation}. Results are shown in Figure~\ref{fig:experience_lesson}. Compared to a vanilla agent without any context, both methods substantially improve effectiveness ($\mathrm{avg}\_m$) and efficiency. On reasoning tasks, lesson-based context slightly underperforms raw experience-based context, suggesting that detailed reasoning traces already provide sufficient guidance. In contrast, on WebShop and SWE tasks, lesson-based context consistently outperforms raw experience. Since Agentic trajectories contain extensive environment observations and interaction noise, distillation effectively filters irrelevant information and retains transferable decision-making tricks.

\begin{figure}[h]
    \centering
    \includegraphics[width=1.0\textwidth]{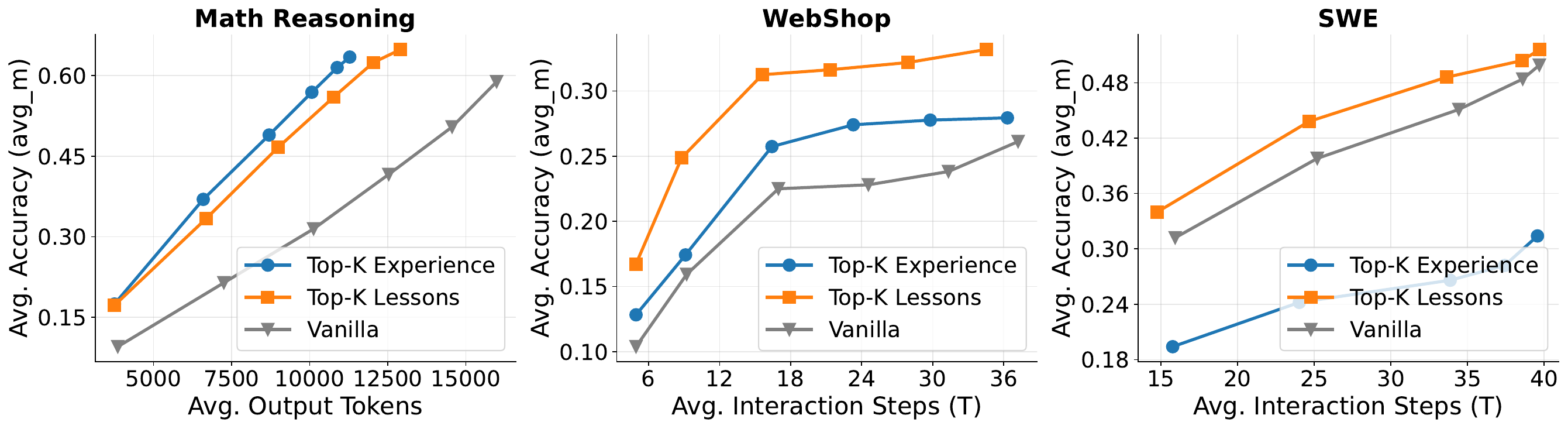}
\caption{\textbf{Raw Experience vs. Distilled Lesson as Context.} Both context construction approaches significantly outperform the vanilla agent without context. Raw experience is slightly stronger for mathematical reasoning, while distilled lessons yield better performance in agentic tasks (WebShop \& SWE), where trajectory-level observations are noisier and distillation helps.
}
\label{fig:experience_lesson}
\vspace{-0.5em}
\end{figure}

\textbf{Lesson Distillation Enables More Favorable Context Scaling.}
We further analyze the \emph{input size scaling behavior} of the two approaches by increasing $K$ in Top-$K$ retrieval. Figure~\ref{fig:context_scaling} reports performance as a function of input token length for both experience-based and lesson-based context construction. First, the lesson-based approach consistently achieves strong performance with a smaller input context size. For a fixed performance level, fewer input tokens are required compared to raw experience. This advantage is particularly pronounced in WebShop, where distilled lessons outperform raw trajectories. Second, as $K$ increases, we observe that the performance of the experience-based approach can degrade, especially when the input context becomes excessively long. This suggests that overly detailed and noisy demonstrations could be misleading. 
\begin{figure}[h]
    \centering
    \includegraphics[width=1.0\textwidth]{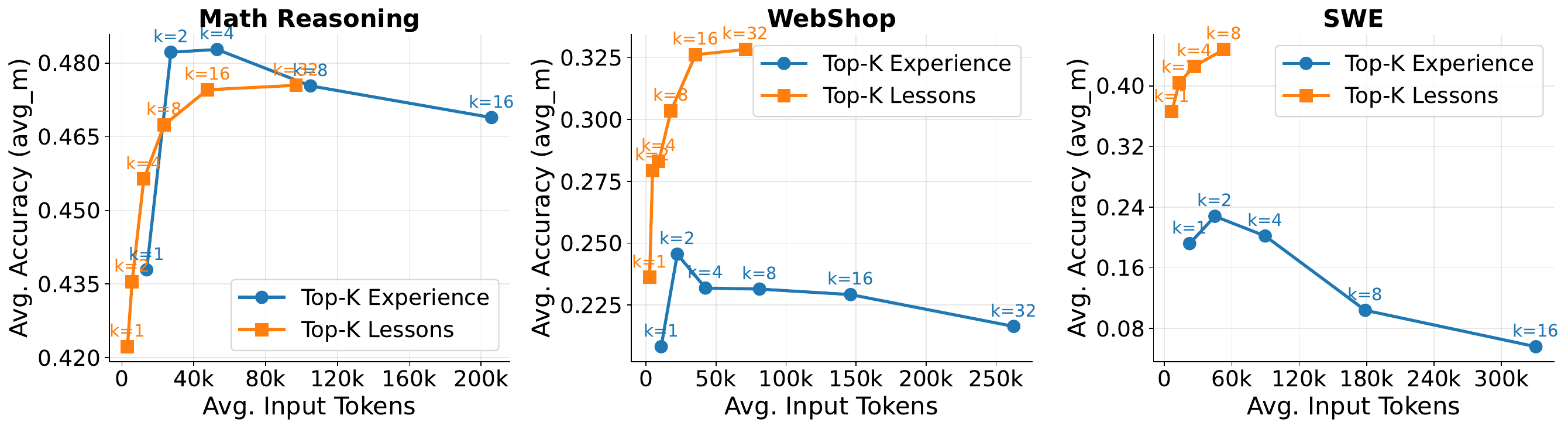}
\caption{\textbf{ Scaling Behavior.} Agent's performance as a function of input context length when increasing $K$ in Top-$K$ retrieval. Distilled lessons achieve better performance with fewer input tokens and remain more robust as context grows. In contrast, raw experience can degrade when the prompt becomes excessively long and noisy. Overall, lesson distillation acts as an effective context compression mechanism that extracts the essential information from experience. 
}
\label{fig:context_scaling}
\vspace{-1.5em}
\end{figure}

\subsection{Experience Size Scaling: How to Construct Effective Memory?}
\label{subsec:memory}
As the experience grows, memory size can also become arbitrarily large. A large memory can introduce redundancy: many entries may be similar and contain overlapping experience. Therefore, this motivates a memory consolidation mechanism, i.e., $\tilde{\Mc}\defeq \mu({\Mc})$.

\textbf{Clustering-Based Memory Consolidation.}
We adopt a clustering-based consolidation strategy. Let each memory entry be associated with an embedding vector $e_i \in \mathbb{R}^d$, derived from the problem statement and the distilled lesson, 
$e_i = \text{Normalize}\big((1-\alpha)\,\pi_e(x_i) + \alpha\,\pi_e(\ell_i)\big),$
where $\alpha \in [0,1]$ controls the contribution of lesson-level semantics. We partition the memory into $N^{\prime}$ clusters using standard $k$-means. For each cluster, we select a single representative entry that is closest to the cluster centroid. The consolidated memory becomes $\tilde{\Mc} = \{ (\tilde{x}_i,\tilde{\ell}_i) \}_{i=1}^{\tilde{N}}$, where $\tilde{N} < N$. We evaluate performance using $\tilde{\Mc}$ across different sizes.

For this study, we consider a new metric \emph{Avg. Retrieval Relevance}, i.e., the relevance score measured by embedding similarity between context retrieved from $\tilde{\Mc}$ and query sample. From Figure~\ref{fig:memory_consolidation} (a), we observe that \emph{retrieval relevance scores increase monotonically} with memory size, as expected, as memory consolidation is essentially a lossy compression.

\textbf{Memory Consolidation Exhibits a Sweet Spot.}
Next, we further evaluate agent's performance w.r.t. the memory size in Figure~\ref{fig:memory_consolidation} (b), our results reveal several key findings. First, consolidation exhibits a \emph{non-monotonic trend}: when $\tilde{N}$ is too small, aggressive compression removes essential information, causing noticeable performance degradation. When $\tilde{N}$ approaches the full memory size, consolidation provides little benefit. Second, there exists a \emph{sweet spot} at intermediate memory sizes where k-means consolidation achieves improved performance and efficiency compared to full memory. These results highlight a key insight: \emph{higher retrieval relevance does not necessarily yield better task performance}. This reveals a fundamental \emph{relevance--diversity tradeoff}: while semantic similarity helps retrieve related lessons, excessive similarity can reduce the diversity of problem-solving strategies and limit generalization. Overall, moderate memory consolidation improves retrieved context by reducing redundancy while preserving diversity, which is essential for experience-augmented agents.

\begin{figure}[h]
    \centering
    \begin{tabular}{rl}
    \subfloat[Retrieval Relevance vs. Consolidated Memory Size]{%
        \includegraphics[width=0.95\textwidth]{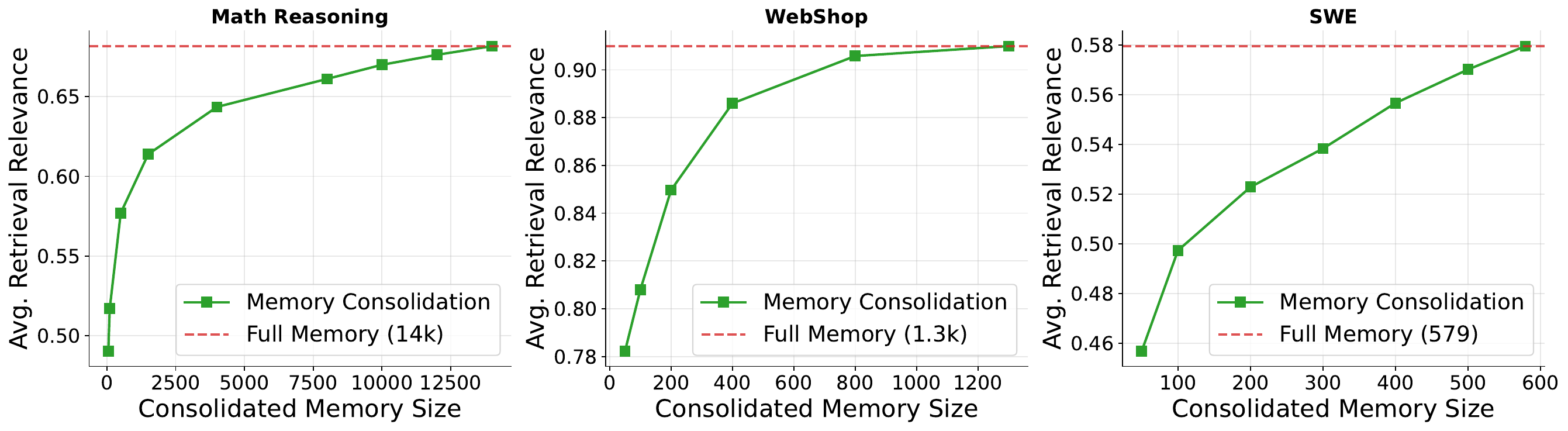}
    } \\
    \subfloat[Performance vs. Consolidated Memory Size]{%
        \includegraphics[width=0.95\textwidth]{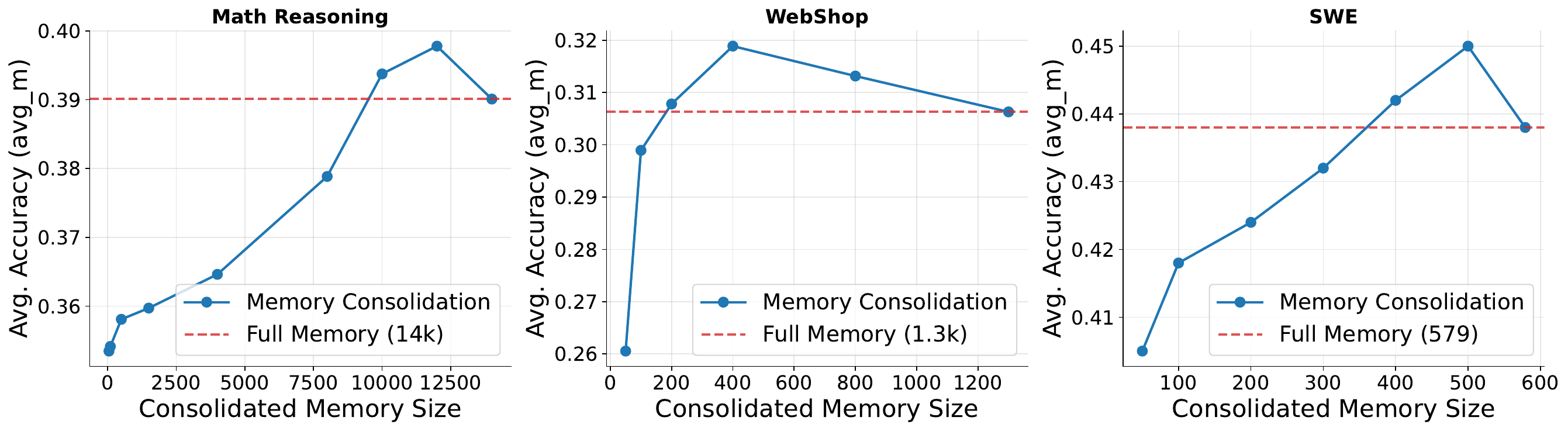}
    }
    \end{tabular}
\caption{\textbf{Experience Scaling Behavior via Memory Consolidation.} We evaluate experience memory consolidation across varying memory sizes. The red dashed line indicates full memory baseline performance. Memory consolidation achieves a sweet spot at intermediate sizes.}
\label{fig:memory_consolidation}
\vspace{-1em}
\end{figure}
\section{What Characterizes an Effective Context?}
\label{sec:analysis}
The previous section demonstrates that constructing context through decocted experience can effectively improve agent's test-time inference.
An important question in practice is how we can construct \emph{effective} context for a new task.
To answer this question, in this section, we first provide an intuitive theoretical insight on the characterization of good context, and then propose an empirical measure that forecasts the agent's performance improvement.

\textbf{Theoretical Analysis.}
In Figure~\ref{fig:experience_lesson}, we demonstrate that more informative context (constructed from distilled lesson) may yield better performance with fewer output tokens (or interaction steps). One may naturally ask whether there is a direct theoretical connection between the informativeness of context and effectiveness of inference. Here, we provide a quantitative yet intuitive relationship between them via information measures.

Fix a query $x \in \mathcal{X}$. Let $C$ denote the random variable corresponding to the context constructed for $x$, and let $Y$ denote the random trajectory generated by the agent given $(x,c)$. 
A natural quantity to capture the informativeness of a context $c$ is the \emph{information gain} of context $c$ for query $x$ defined as
\[
I(Y; C=c \mid X=x)
\defeq
H(Y \mid X=x) - H(Y \mid X=x, C=c).
\]
We note that the average of the information gain over the randomness in the context corresponds to the \emph{per-query mutual information} $I(Y;C \mid X=x)
\defeq \sum_{c\in\Cc}p(c|x)I(Y;C=c|X=x)$,
which is always non-negative. 
However, the information gain for each context can take either positive or negative values:
if $I(Y; C=c \mid X=x)\le 0$, context fails to reduce uncertainty, indicating that the context may be misleading or noisy; $I(Y; C=c \mid X=x)>0$ indicates that context is \emph{effective} in that it reduces output uncertainty.

To connect the information gain with inference efficiency, let $\tau$ denote the stopping time such that the first EOS token is generated at time $\tau+1$, such that $Y_{1:\tau}$ corresponds to the sequence of the non-EOS tokens.

\begin{proposition}[Context Efficiency Bound] 
\label{prop:efficiency}
Fix $X=x$ and $C=c$.
Suppose there exists a constant $h>0$ such that
\[
H\left(
Y_{1:\tau} \mid \tau=t, X=x, C=c
\right)
\ge th,
\quad \forall t\ge 0.
\]
In words, each token carries at least $h$ bits of uncertainty on average.
Then,
\[
\mathbb{E}[\tau \mid X=x, C=c]
\le
\frac{H(Y\mid X=x, C=c)}{h}
=\frac{H(Y\mid X=x)-I(Y ;C=c \mid X=x)}{h}.
\]
\end{proposition}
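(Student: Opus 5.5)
The plan is to work conditionally on $X=x, C=c$ throughout, and to suppress this conditioning in notation. The key observation is that the generated sequence $Y$ determines the stopping time $\tau$: $\tau$ is simply the position just before the first EOS token. Moreover, $Y$ is in one-to-one correspondence with the prefix $Y_{1:\tau}$, since the output terminates at the first EOS. If trailing content after EOS is allowed, then $Y_{1:\tau}$ is at least a function of $Y$. Either way, $H(Y)\ge H(Y_{1:\tau})$, and $\tau$ is a deterministic function of $Y_{1:\tau}$ (it is the length of the prefix).

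The steps are then the following.

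First, apply the chain rule to the pair $(Y_{1:\tau},\tau)$:
\[
H(Y_{1:\tau}) = H(Y_{1:\tau},\tau) = H(\tau) + H(Y_{1:\tau}\mid \tau).
\]
The first equality holds because $\tau$ is a function of $Y_{1:\tau}$.

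Second, expand the conditional entropy as an average over $t$:
\[
H(Y_{1:\tau}\mid\tau)=\sum_{t\ge0}\Pr(\tau=t)\,H(Y_{1:\tau}\mid \tau=t).
\]
The hypothesis gives $H(Y_{1:\tau}\mid\tau=t)\ge th$ for every $t$, so this sum is at least $h\,\mathbb{E}[\tau]$.

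Third, combine these with $H(\tau)\ge 0$ to get the chain
\[
h\,\mathbb{E}[\tau]\le H(Y_{1:\tau}\mid\tau)\le H(Y_{1:\tau})\le H(Y\mid X=x,C=c).
\]
Dividing by $h>0$ gives the inequality in Proposition~\ref{prop:efficiency}.

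Finally, the displayed equality is simply a rearrangement of the definition of the information gain $I(Y;C=c\mid X=x)=H(Y\mid X=x)-H(Y\mid X=x,C=c)$.

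The main obstacle is being careful about the relation between $Y$ and $Y_{1:\tau}$, together with integrability when $\tau$ is unbounded. For the first, I will argue that $Y_{1:\tau}$ is a measurable function of $Y$, so the data-processing inequality $H(f(Y))\le H(Y)$ applies. For the second, if $H(Y\mid X=x,C=c)=\infty$ the bound is trivial. Otherwise every quantity in the chain above is finite, and the nonnegative sum in the second step can be handled by monotone convergence. This makes the bound on $\mathbb{E}[\tau]$ rigorous, including the conclusion that $\mathbb{E}[\tau]$ is finite.
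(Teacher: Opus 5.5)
Your proof is correct and is essentially the paper's argument. The paper chains $H(Y\mid X=x,C=c)\ge \mathbb{E}[H(Y\mid \tau,X=x,C=c)]\ge \mathbb{E}[H(Y_{1:\tau}\mid \tau,X=x,C=c)]\ge h\,\mathbb{E}[\tau]$, while you apply the same two entropy-reduction steps in the opposite order: first passing to the prefix $Y_{1:\tau}$ as a function of $Y$, then conditioning on $\tau$ via the chain rule; your remarks on the infinite-entropy and unbounded-$\tau$ cases add rigor the paper leaves implicit.
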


Given base uncertainty $H(Y \mid X=x)$ and the average entropy rate lower bound $h$, larger information gain $I(Y; C=c \mid X=x)$ leads to a strictly smaller upper bound on the expected trajectory length $\mathbb{E}[\tau \mid X=x,C=c]$. 
Intuitively, informative context would reduce uncertainty in the output
trajectory, effectively contracting the search space that the model must
explore during generation. As a result, the agent can reach solutions using fewer reasoning tokens or interaction steps.

\begin{figure}[t]
\centering
\begin{tabular}{rl}
    \subfloat[Conditional Entropy vs.\ Expected Output Length.]{%
        \includegraphics[width=0.44\textwidth]{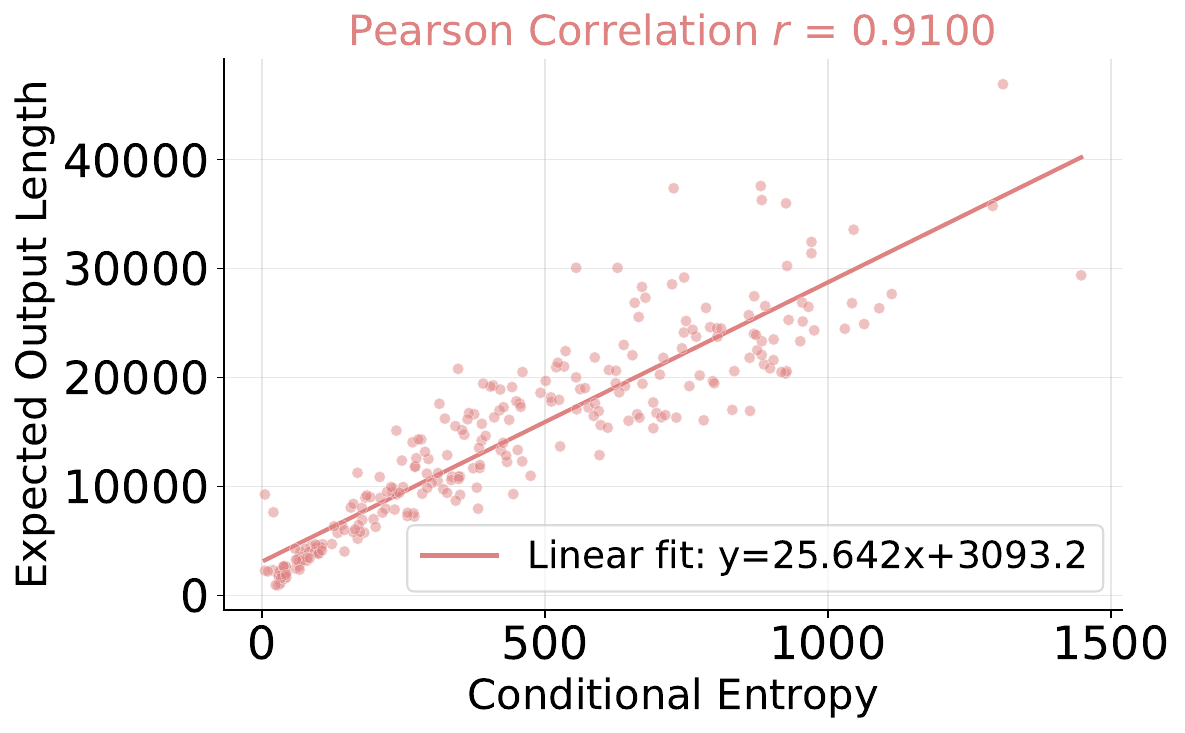}
    } &
    \subfloat[Distribution of Information Gain.]{%
        \includegraphics[width=0.44\textwidth]{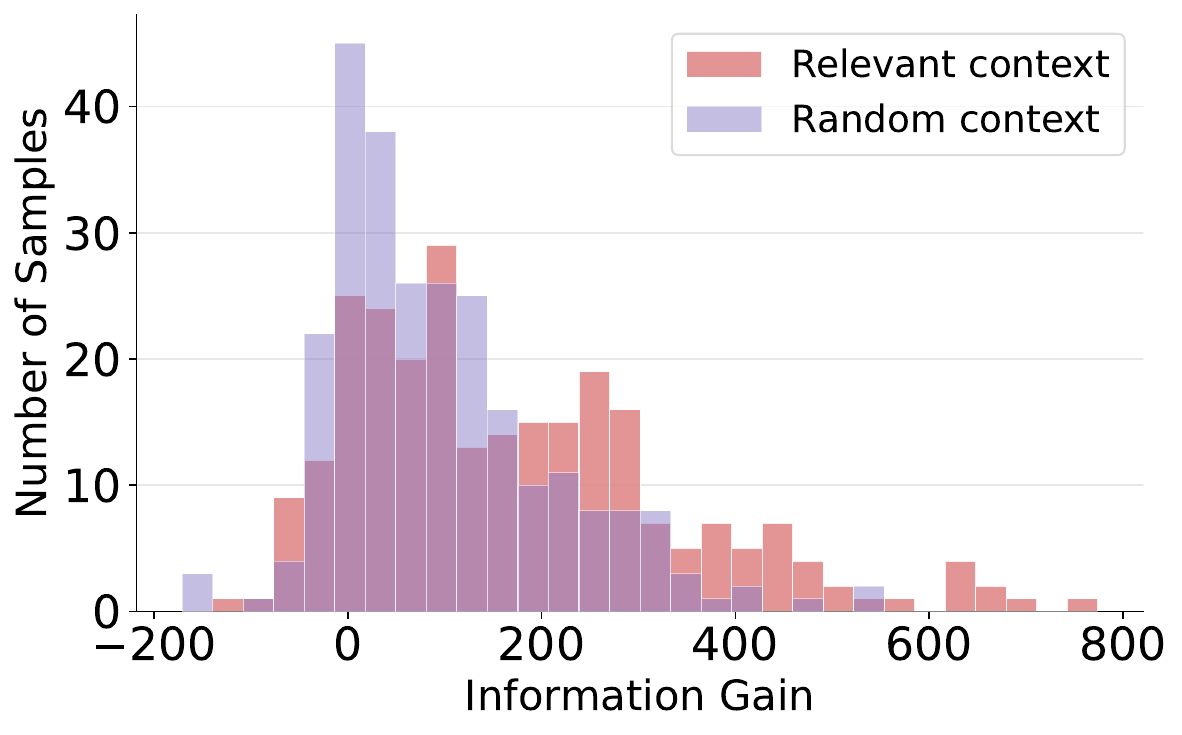}
    }
\end{tabular}
\caption{
\textbf{Empirical validation of Proposition~\ref{prop:efficiency}.}
Figure \textbf{(a)}'s strong linear correlation confirms that $\hat{H}(Y \mid X=x, C=c)$ tightly predicts the expected output length up to a constant.
Figure \textbf{(b)} shows that relevant context (retrieved lessons) yields higher information gain than random context.}
\label{fig:empirical_validation}
\vspace{-1.5em}
\end{figure}

\textbf{Empirical Validation.}
We empirically validate Proposition~\ref{prop:efficiency} by estimating the
conditional entropy $\hat{H}(Y \mid x,c)$ for each $(x,c)$.
Specifically, we compute a Monte Carlo estimator of the conditional entropy
using the token-level log-probabilities produced by the model. We first examine the predicted relationship between conditional entropy
and output length. Figure~\ref{fig:empirical_validation}(a) plots
$\hat{H}(Y \mid X=x,C=c)$ against the average output length across all evaluation samples from the math reasoning benchmarks. The two quantities exhibit a strong linear
correlation ($r = 0.91$), confirming that the
bound in Proposition~\ref{prop:efficiency} is tight up to a constant
factor: lower conditional entropy directly translates into shorter
generated trajectories.

Next, we investigate whether the \emph{information gain} captures
the \emph{quality} of the retrieved context. For each problem $x$, we estimate the empirical information gain $\hat{I}(Y;C=c\mid X=x)=\hat{H}(Y\mid X=x) - \hat{H}(Y\mid X=x,C=c)$, where $\hat{H}(Y\mid X=x)$ corresponds to the entropy under the no-context setting. Figure~\ref{fig:empirical_validation}(b) compares the distribution of $\hat{I}(Y;C=c\mid X=x)$ under two conditions: \emph{relevant context} (retrieved lessons) and \emph{random context}
(randomly sampled lessons). Relevant context produces substantially higher information gain (mean $\hat{I}=172.2$ vs.\ $97.2$ for random
context) and dominates on 76\% of evaluation samples. 

Together, these results provide an explanation for the context-scaling behavior observed in Section~\ref{sec:scaling_behavior}. Context construction methods that preserve more information in context
about the correct trajectory increase
$I(Y;C=c\mid X=x)$ and enable more efficient inference.

\begin{figure}[t]
\centering
\begin{tabular}{rl}
    \subfloat[Correlation Between Context Quality \& Performance Improvement]{%
        \includegraphics[width=0.44\textwidth]{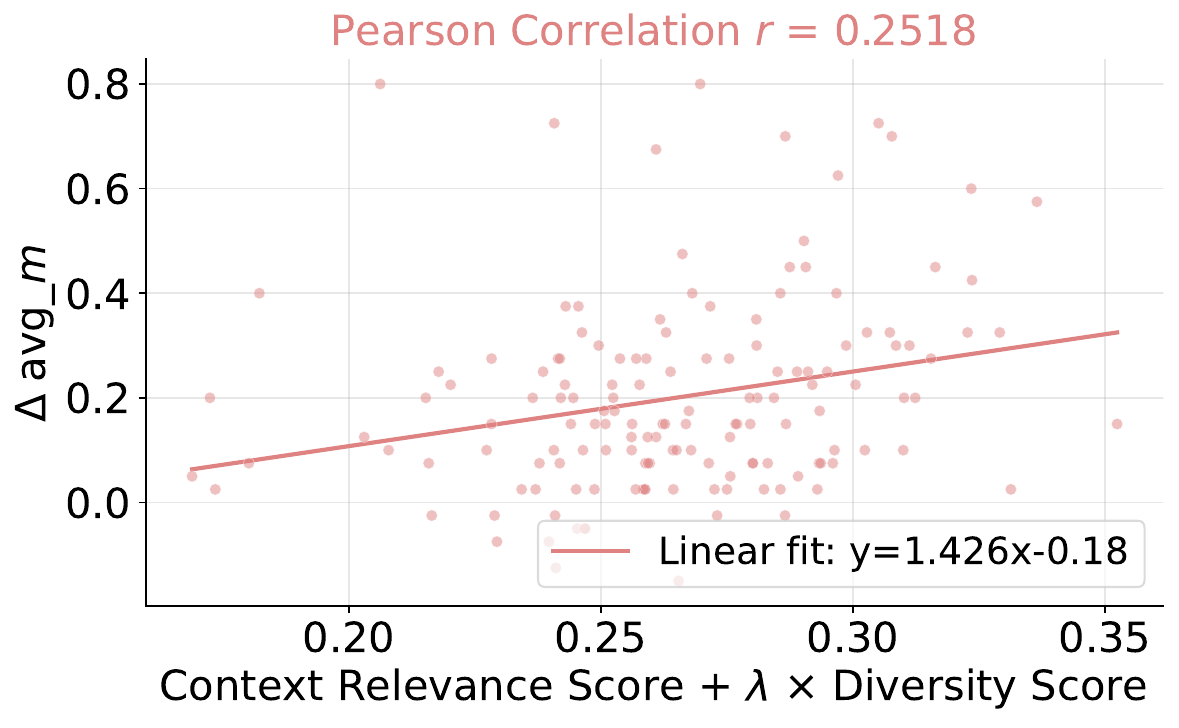}
    } &
    \subfloat[Trade-off Between Relevance \& Diversity]{%
        \includegraphics[width=0.44\textwidth]{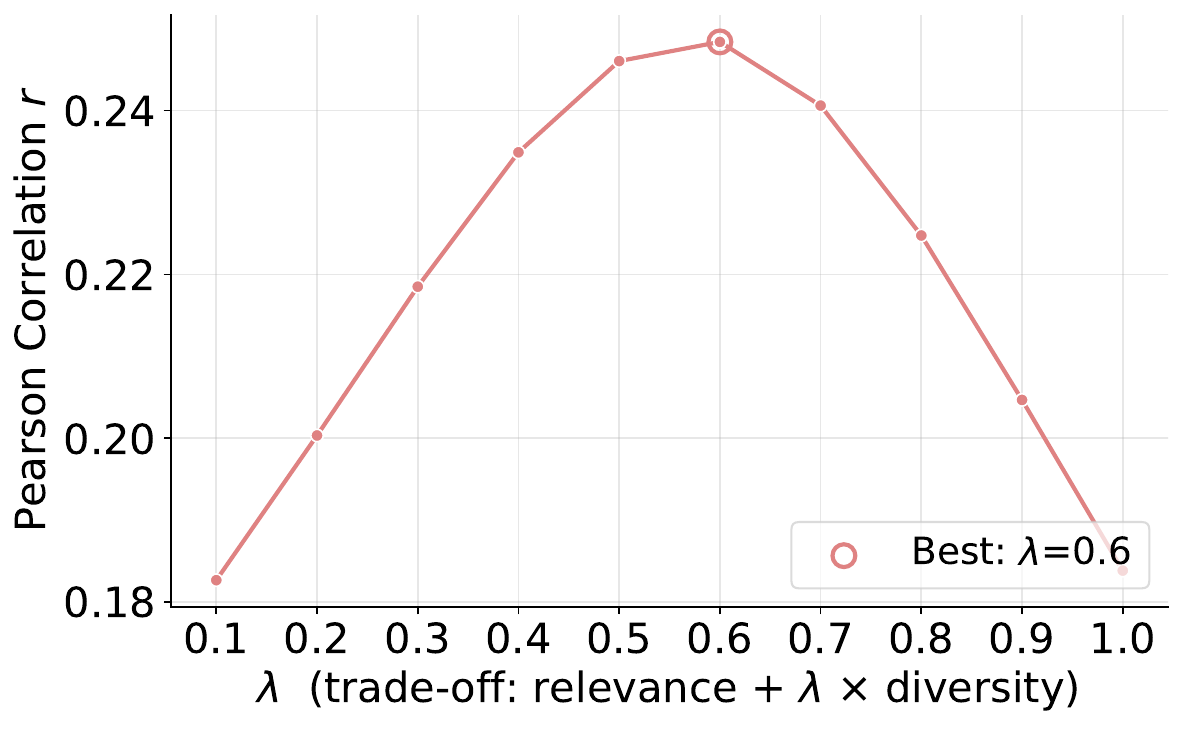}
    }
\end{tabular}
\caption{
\textbf{Correlation between Context Quality and Performance Improvement.}
(a) The relationship between the context quality score and the improvement measured in $\Delta \mathrm{avg}\_m(x)$. The positive Pearson correlation indicates that higher-quality contexts tend to yield larger performance gains. (b) Pearson correlation as the coefficient $\lambda$ varies. The correlation peaks around $\lambda = 0.6$, suggesting that balancing relevance and diversity leads to the most informative contexts.
}
\label{fig:context_quality}
\vspace{-1em}
\end{figure}
\textbf{What Makes a Good Context in Practice?}
The analysis above suggests that informative context can effectively reduce the length of the trajectory. In practice, however, the measure of information gain is not available before generation, so we need a proxy metric for context quality. 
Here, we hypothesize that effective context should balance two practically \emph{measurable} properties, \emph{relevance} and \emph{diversity}, defined as follows.
Recall that
$R(x,\tilde\Mc)=\{z_{i_1},\dots,z_{i_K}\}$ denotes the set of $K$ retrieved memory entries for a query $x$. For each $z \in R(x,\tilde\Mc)$, let $\pi_e(z) \in \mathbb{R}^d$ denote its embedding representation. \textbf{(1) Relevance.} We define the relevance score as the average similarity between the query embedding and the embeddings of the retrieved entries:
$\mathrm{Rel}_{R,\tilde\Mc}(x)
=
\frac{1}{K}
\sum_{z \in R(x,\tilde\Mc)}
\langle \pi_e(x), \pi_e(z) \rangle$.
A higher score indicates that the retrieved entries are more semantically aligned with the query.
\textbf{(2) Diversity.}
To measure redundancy within the retrieved set, we compute the negative average pairwise similarity among retrieved entries:
$\mathrm{Div}_{R,\tilde\Mc}(x)
=
-
\frac{1}{K(K-1)}
\sum_{\substack{z\neq z' \in R(x,\tilde\Mc)}}
\langle \pi_e(z), \pi_e(z') \rangle$.
A higher score indicates that the retrieved set is less redundant and covers a broader range of concepts or strategies. We define a retrieval quality score by combining these two terms
\[
Q_{R,\tilde{\Mc}}(x)
=
\mathrm{Rel}_{R,\tilde{\Mc}}(x)
+
\lambda \, \mathrm{Div}_{R,\tilde{\Mc}}(x),
\]
where $\lambda$ controls the relevance--diversity trade-off.
For each query $x$, we also quantify the performance improvement induced by the context:
\[
\Delta \mathrm{avg}\_m(x)
=
\mathrm{avg}\_m_{\mathrm{context}}(x)
-
\mathrm{avg}\_m_{\mathrm{vanilla}}(x),
\]
where $\mathrm{avg}\_m_{\mathrm{context}}(x)$ denotes the performance achieved using the context, and $\mathrm{avg}\_m_{\mathrm{vanilla}}(x)$ denotes the performance of the same agent with no context. 

To validate our hypothesis, we compute the Pearson correlation between the retrieval quality score $Q_{R,\tilde{\Mc}}(x)$ and $\Delta \mathrm{avg}\_m(x)$ across all evaluation samples. Figure~\ref{fig:context_quality}(a) demonstrates a positive Pearson correlation ($r = 0.25$), indicating that retrieved sets that are both relevant to the query and internally diverse tend to yield larger performance gains. Figure~\ref{fig:context_quality}(b) further examines the trade-off by varying $\lambda$. The correlation peaks around $\lambda=0.6$, suggesting that the most useful retrieved contexts should balance this trade-off.
\section{A Tree-Structured Memory for Experience-Augmented Agents} 
\label{sec:concept_tree}
The analyses in Section~\ref{sec:scaling_behavior} and Section~\ref{sec:analysis} identify key properties of effective memory for context construction, and in particular reveal the limitations of a \emph{flat} memory without structure. First, although memory consolidation (i.e., reducing the number of entries) can improve performance when applied at an appropriate level (Figure~\ref{fig:memory_consolidation}(b)), the optimal operating point is difficult to determine in advance. Second, Figure~\ref{fig:context_quality} shows that useful context should balance relevance and diversity, whereas standard similarity-based retrieval over a flat memory may fail to capture deeper conceptual relationships. Together, these observations suggest that effective memory should (i) adaptively control retrieval granularity without requiring explicit consolidation and (ii) promote diversity beyond surface-level similarity. Motivated by this, we propose a structured memory called a \emph{hierarchical concept tree}, tailored for experience-augmented agents. At a high level, the tree organizes memory entries via embedding-based clustering in a coarse-to-fine manner, enabling concept-level retrieval.

\textbf{Hierarchical Concept Tree for Structured Memory.}
Let $\{z_i = (x_i, \ell_i)\}_{i=1}^N$ denote the set of memory entries, where $\ell_i$ is the distilled lesson for problem $x_i$.
In the hierarchical concept tree, each leaf node corresponds to a subset of entries grouped by a shared concept, represented by a cluster center embedding.
The tree is constructed offline in two stages; see the upper half of Algorithm~\ref{alg:concept_tree} in Appendix.
\textbf{(1) Concept Extraction.}
For each entry $z_i=(x_i,\ell_i)$, we use the base LLM $\pi_\theta$ to extract a short, structured \emph{concept description} $d_i\defeq \pi_\theta(x_i, \ell_i)$, which abstracts the underlying topic, problem pattern, and solution technique.
For example, a geometry problem may yield:
``\texttt{Topic: angle chasing~$\|$~Problem: finding angles in cyclic quadrilaterals~$\|$~Technique: inscribed angle theorem with auxiliary lines}''. We then embed $d_i$ using an embedding model $\pi_e$, yielding $e_i \defeq \pi_e(d_i)$.
\textbf{(2) Hierarchical Clustering.} 
We construct an $L$-level hierarchical tree $\mathcal{T}$ over the embeddings $\{e_i\}_{i=1}^N$ using recursive bisecting $k$-means with cosine similarity.
Each internal node splits its data into $b$ children, where the branching factor is
$b = \lceil (\lfloor N / s \rfloor)^{1/L} \rceil$,
with $s$ the target leaf size.
A node becomes a leaf when depth $L$ is reached or its size falls below a threshold.
Each leaf $\lambda_j$ stores a subset of entries, representing a coherent concept group.
\begin{figure}[t]
\centering
\includegraphics[width=0.9\textwidth]{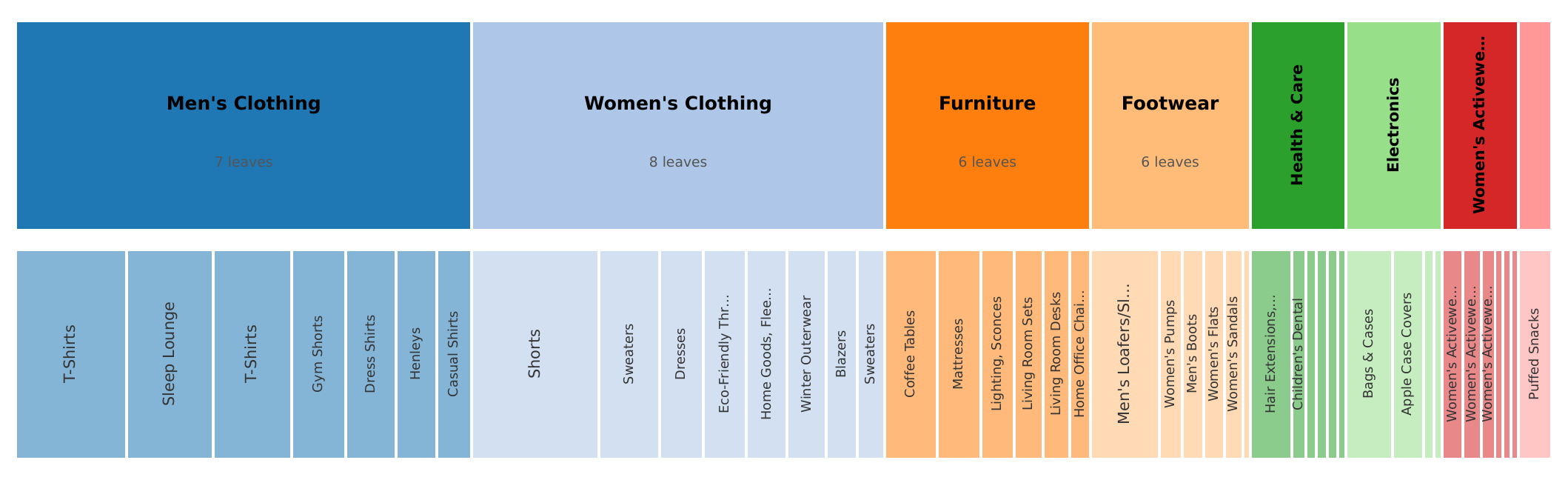}
\caption{
\textbf{Visualization of Concept Tree (WebShop).} Hierarchical concept tree constructed from 1,135 experience records on WebShop task. The tree organizes standard lesson-based memory entries into 8 broad topics and 44 concrete concept groups via hierarchical clustering. Each leaf node contains a collection of lessons, with leaf segment widths proportional to the number of records.}
\label{fig:tree_visualize}
\vspace{-1.0em}
\end{figure}

\textbf{Concept Tree Retrieval with LLM Re-ranking.}
At test time, retrieval operates at the leaf level (Algorithm~\ref{alg:concept_tree}, lower half). Given a query $x$ with embedding $e_x=\pi_e(x)$, we compute a \emph{leaf affinity score} $a_j(x)$ for each leaf $\lambda_j$ as the cosine similarity between $e_x$ and the leaf centroid. We then select the top $n_\ell$ leaves and merge their entries into a candidate pool $\operatorname{Top}\text{-}K_{\text{cand}}\{z_i\}_{i=1}^N$. This pool is much smaller than the full memory and remains \emph{structurally diversified}, since different leaves correspond to different conceptual groups. Finally, we prompt $\pi_\theta$ with the query $x$ and the candidate problem statements $\{x_i\}$, ask it to identify the most useful lessons, and format the selected entries into the final context.

\textbf{Results.} We evaluate whether this more advanced memory structure can further improve experience-augmented agents. Specifically, we construct a two-level concept tree ($L=2$) and visualize it on WebShop in Figure~\ref{fig:tree_visualize}. The first level captures broad shopping categories, while the leaf level corresponds to finer-grained subcategories, reflecting the intuition that similar products often require similar shopping strategies. We compare this agent equipped with the concept tree with a flat lesson memory $\tilde{\Mc}=\{(x_i,\ell_i)\}_{i=1}^N$ using semantic retrieval. Figure~\ref{fig:concept_tree_webshop}(a) shows that the concept tree leads to improved inference performance. Figure~\ref{fig:concept_tree_webshop}(b) shows that concept-tree-based agent tends to select lessons from a more diverse set of concept groups (the number of the concepts covered by the retrieved context) compared to the semantic-based retrieval baseline. Together with the analysis in Section~\ref{sec:analysis}, it suggests the improved performance is due to a better balance between relevance and diversity.\vspace{-1em}

\begin{figure}[h]
\centering
\begin{tabular}{rl}
    \subfloat[Performance Comparison]{%
        \includegraphics[width=0.47\textwidth]{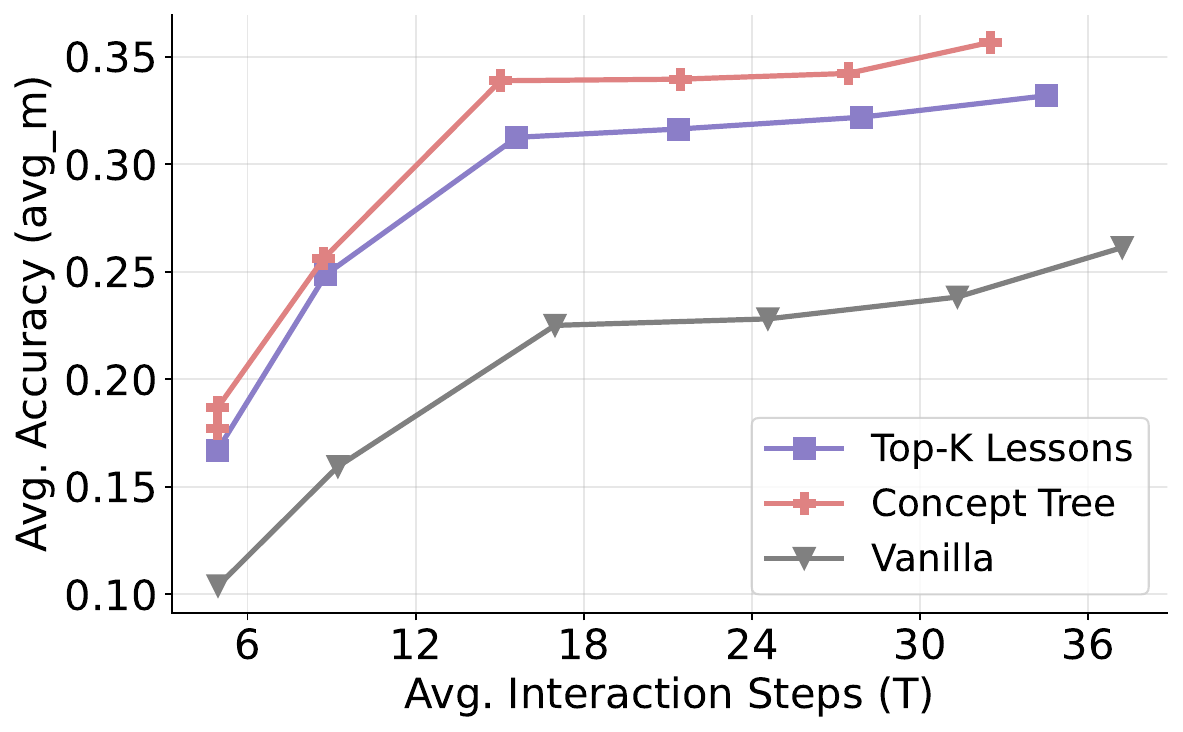}
    } &
    \subfloat[Concept Group Diversity]{%
        \includegraphics[width=0.47\textwidth]{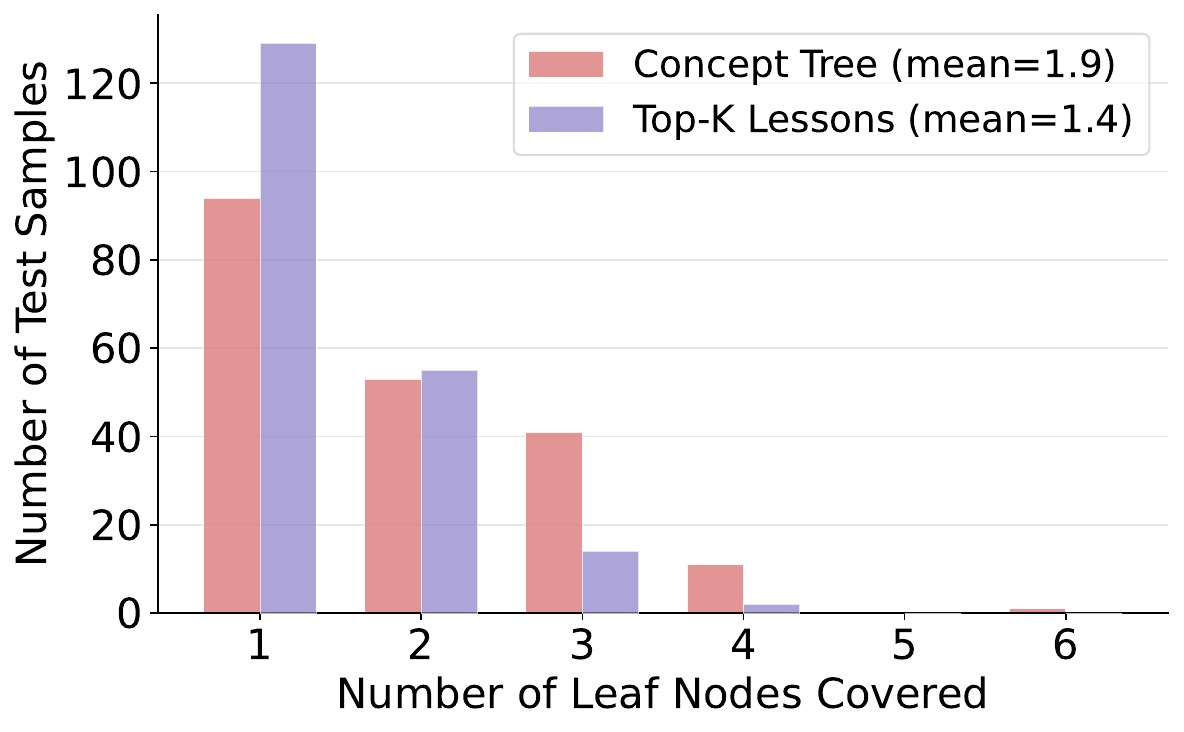}
    }
\end{tabular}
\caption{
\textbf{Hierarchical Concept Tree vs. Lesson Retrieval}
(a) Performance comparing the hierarchical concept tree against Top-$K$ lesson retrieval and vanilla baselines. (b) Concept group diversity comparison: concept tree achieves greater leaf-node coverage than Top-$K$ lesson retrieval.
}
\label{fig:concept_tree_webshop}
\vspace{-1.0em}
\end{figure}

\section{Concluding Remarks}
Across reasoning and agentic tasks, our analysis suggest that decocted experience serves as the key ingredient for effective context construction that improves the test-time inference in the LLM agent. Several directions remain for future work. One promising direction is to develop more advanced mechanisms for leveraging experience, including more sophisticated memory structures and closed-loop systems that iteratively collect new experience and update model weights to improve performance over time. Another important direction is continual learning in an agentic system: given a new task or environment, an agent should be able to accumulate new experience, integrate it with existing knowledge, and continually improve without forgetting previously acquired capabilities.

\clearpage
\bibliography{ref}
\bibliographystyle{colm2026_conference}

\clearpage
\onecolumn

\newpage
\appendix
\begingroup
\hypersetup{linkcolor=darkblue} 
\addtocontents{toc}{\protect\StartAppendixEntries}
\listofatoc
\endgroup

\section{Related Work} \label{app:related}
\paragraph{Context Engineering.}
Context engineering can be viewed as optimizing the information fed into an LLM at inference time through retrieval, generation, processing, and management, rather than changing the model parameters~\citep{mei2025survey}. Early context engineering methods typically relied on manually designed prompts for downstream tasks, including few-shot prompting~\citep{brown2020language} and prompt engineering~\citep{sahoo2024systematic}. Chain-of-thought (CoT) prompting~\citep{wei2022chain} further shows that multi-step reasoning can substantially improve performance on reasoning tasks. Beyond hand-crafted prompts, self-generated refinement~\citep{madaan2023selfrefine, shinn2023reflexion} treats the model's own reflection as an evolving context, improving generation across a range of tasks. Effective context can also be constructed from external knowledge via retrieval, such as retrieval-augmented generation (RAG)~\citep{lewis2020rag, gao2023retrieval, qian2025memorag}, which injects relevant knowledge into the prompt for downstream question answering tasks. However, in agentic settings, context is not merely evidence for answering a question, it should also encode reusable strategies and executable workflows. A growing body of recent work~\citep{zhang2025ace, wang2024agent, ouyang2025reasoningbank, wei2025evomemory} argues that agents benefit from dynamically evolving context that contains strategies and workflows, rather than from naive context summarization.

\paragraph{Agent Memory Systems.}
Transformer-based LLM agents remain constrained by a finite context window, which motivates external memory systems that organize past information while allowing the agent to retrieve relevant content efficiently at test time. Early memory architectures are primarily inference-time systems. MemGPT~\citep{packer2023memgpt} introduces an OS-inspired hierarchy that manages information across working, recall, and archival memory, while MemOS~\citep{li2025memos} builds a unified system for representation, organization, and governance across multiple memory forms. Mem0~\citep{chhikara2025mem0} focuses on scalable deployment by extracting and consolidating salient facts from multi-session interactions with graph-based memory. MemTree~\citep{rezazadeh2024isolated} further explores structured memory by organizing information into a tree representation, enabling hierarchical retrieval and integration beyond flat memory stores. MIRIX~\citep{wang2025mirix} builds a multi-agent controller that routes across specialized memory types such as episodic, semantic, and procedural memory. A more recent line of work seeks to train LLMs to \emph{learn} memory management. MemGen~\citep{zhang2025memgen} interleaves reasoning with generated latent memory, whereas Mem-$\alpha$~\citep{wang2025memalpha}, MemAgent~\citep{yu2025memagent}, and Memory-R1~\citep{yan2025memoryr1} explicitly optimize memory construction or memory operations with reinforcement learning. Despite proposing a variety of memory architectures and management strategies, most of the existing literature still centers on conversational recall~\citep{maharana2024evaluating}, user preference persistence~\citep{jiang2025know, jiang2025personamem}, or long-context question answering~\citep{yang2018hotpotqa}, where the central challenge is remembering factual information rather than reusing problem-solving strategies. In contrast, our work treats memory not merely as a storage-and-recall mechanism, but as a source of \emph{decision-support context} for future reasoning and acting. Accordingly, our focus is not only on how to store and retrieve information, but on how experience should be distilled, consolidated, and organized so that the retrieved context is most strategically useful at test time.

\paragraph{Leveraging Past Experience for Agents.}
An emerging research direction studies how agents self-improve from prior interaction experience, either at test time or through weight updates during training. An early representative test-time approach is ExpeL~\citep{zhao2024expel}, which collects trajectories on training tasks, abstracts them into insights, and retrieves both insights and successful past experiences at test time, showing that agents can improve from accumulated experience without gradient updates. More recent test-time approaches further develop this idea in various forms. Dynamic Cheatsheet~\citep{suzgun2025dynamic} equips LLMs with an adaptive memory of reusable strategies and problem-solving insights extracted from past attempts. ReasoningBank~\citep{ouyang2025reasoningbank} distills lessons from both successful and failed trajectories into reasoning memory and couples the agent with memory-aware test-time scaling. Under a similar setting, Evo-Memory~\citep{wei2025evomemory} provides a streaming benchmark that evaluates whether agents can retrieve, adapt, and evolve memory using accumulated interactions. In parallel, training-time approaches such as Early Experience~\citep{zhang2025earlyexperience} propose reward-free supervision derived from an agent's own experience, where observed future states serve as supervision. Closely related, DreamGym~\citep{chen2025scaling} proposes scaling agent learning via \emph{experience synthesis}: instead of relying on expensive real-environment rollouts, it trains agents with a reasoning-based experience model that generates synthetic transitions, feedback, and progressively more challenging tasks for online RL. ExRL~\citep{shi2026erl} embeds an explicit experience--reflection--consolidation loop inside RL training: a first attempt receives feedback, self-reflection guides a second attempt, and successful improvements are distilled into the base policy to encourage self-improvement. Our work is complementary to this line of research, but focuses on a different question: rather than proposing yet another framework for leveraging experience, we ask what makes past experience actually useful for future inference. Our key insight is that experience becomes effective not simply by being stored, but by being \emph{decocted} into context: extracting the essence from experience, organizing it coherently, and retrieving salient information to to balance relevance with diversity, thereby revealing the principles that make experience-augmented agent effective and scalable.

\section{Proof of Proposition~\ref{prop:efficiency}}
\label{app:proof}

\begin{proof}
Consider
\begin{align}
H(Y \mid X=x,C=c)
&\stackrel{(a)}{\ge} \E_{p(\tau|x,c)}\Bigl[H(Y \mid \tau,X=x,C=c)\Bigr]\nonumber\\
&\stackrel{(b)}{\ge} \E_{p(\tau|x,c)}\Bigl[H(Y_{1:\tau} \mid \tau,X=x,C=c)\Bigr]\nonumber\\
&\stackrel{(c)}{\ge} h\E_{p(\tau|x,c)}[\tau].
\label{eq:intermed}
\end{align}
Here, $(a)$ and $(b)$ follow since conditioning and dropping variables reduce entropy, respectively; $(c)$ follows from  the lower-bound assumption on the conditional entropy.

Dividing both sides by $h>0$, we have
\begin{align*}
\E_{p(\tau|x,c)}[\tau]=\E[\tau \mid X=x, C=c]
&\le
\frac{H(Y \mid X=x,C=c)}{h}\\
&=
\frac{H(Y \mid X=x) - I(Y; C=c \mid X=x)}{h},
\end{align*}
where the last equality follows from the definition of information gain.
\end{proof}
\section{Experimental Setup} \label{app:setting}

\subsection{Datasets} \label{app:datasets}
\paragraph{Mathematical Reasoning.}
We use the DAPO-Math dataset~\citep{yu2025dapo} to construct the experience memory. The training set contains 14,116 competition-level mathematical reasoning problems spanning a diverse set of topics, including algebra, geometry, number theory, combinatorics, and probability. For evaluation, we consider six challenging math reasoning benchmarks: AMC 2023, AIME 2024, AIME 2025, HMMT 2024, HMMT 2025, and BeyondAIME~\citep{bytedance_seed_2025_beyondaime}, for a total of 260 test problems. These benchmarks are drawn from highly competitive high-school and university-level mathematics contests. The problems are intentionally difficult and require substantially deeper reasoning than standard school-level math benchmarks. Since both the training set and the evaluation benchmarks provide ground-truth final answers, we can automatically verify model outputs and assign binary feedback \(r \in \{0,1\}\) based on whether the generated final answer is correct.

\paragraph{WebShop Agent.}
WebShop~\citep{yao2022webshop} is a large-scale interactive web environment that simulates online shopping. In this environment, an agent is given a natural-language instruction describing a target product and must navigate an e-commerce website to identify, configure, and purchase an item that satisfies the user’s request. The environment is built from more than 1.18 million real-world Amazon products and 12,087 crowd-sourced shopping instructions, and each episode unfolds over multiple webpage types, including search, results, item, and item-detail pages. At each step, the agent observes the current webpage and can take high-level actions such as issuing a search query, selecting a product from the results list, choosing product options, opening detailed descriptions or overviews, navigating between pages, and finally clicking \texttt{Buy} to terminate the episode. Following the WebShop setup in AgentGym~\citep{xi2024agentgym}, we use 3,930 training trajectories to construct memory and 200 test episodes for evaluation. Performance is measured by a scalar reward \(r \in [0,1]\) assigned to the final purchased item, which reflects how well it matches the user instruction in terms of product type, attributes, options, and price.

\paragraph{Software Engineering Agent.}
We consider software engineering tasks based on SWE-bench~\citep{jimenez2023swe}, a benchmark constructed from real GitHub issues and their corresponding pull requests across 12 popular Python repositories. Each instance provides an issue description and a pre-fix snapshot of the repository, and the agent must generate code edits that resolve the issue in the existing codebase. We adopt the SWE-agent framework~\citep{yang2024swe}, in which the agent operates in a command-line environment with access to standard bash commands as well as specialized tools for repository navigation, file viewing, content search, and code editing. This setting requires the agent to localize the relevant files, understand the repository structure, implement the fix, and interact with the execution environment to check whether the modification is successful. Following the SWE-bench evaluation protocol, an instance is counted as resolved (i.e., \(r \in \{0,1\}\)) only if the generated patch applies successfully and all associated fail-to-pass and pass-to-pass tests succeed, ensuring both issue resolution and preservation of existing functionality. For memory construction, we use the SWE-bench full set excluding SWE-bench Verified, yielding 1,794 training instances, and we evaluate on SWE-bench Verified, a human-validated test set of 500 instances designed for more reliable assessment.

\subsection{Implementation Details} \label{app:exp_details}
\paragraph{Raw Memory Construction.}
The memory stores the agent's past experience from interactions on the training set. For each training sample, we let the LLM generate $m=4$ random trajectories using a sampling temperature of $1.0$ and an unlimited thinking budget. We then retain only those samples for which the feedback signals across the four attempts are not all zero, ensuring that each memory entry contains at least one positive experience. The raw memory is stored as a dictionary containing the problem statement, the raw experience, and the corresponding feedback signals. 

\paragraph{Lesson Distillation.}
The lesson distillation procedure introduced in Section~\ref{subsec:context} uses the agent itself to extract a reusable lesson from raw experience, i.e., a problem-solving strategy distilled from the $m=4$ sampled attempts. For successful attempts, the agent is prompted to summarize key insights and common reasoning patterns. For failed attempts, it is prompted to reflect on common mistakes and reasoning flaws that should be avoided in future problems. We use greedy decoding for lesson extraction. 

\paragraph{Memory Consolidation.}
The memory consolidation procedure introduced in Section~\ref{subsec:memory} applies standard $k$-means clustering in the embedding space. The hyperparameter for combining the problem-statement embedding and the lesson embedding is set to $\alpha = 0.5$. Consolidation is applied after lesson distillation, starting from the lesson-based memory $\tilde{\Mc}=\{(x_i,\ell_i)\}_{i=1}^N$, with the goal of compressing it into a smaller memory with $\tilde{N}$ entries. Specifically, the $N$ memory entries are partitioned into $\tilde{N}$ clusters, and only the entry closest to each cluster centroid is retained, while the remaining cluster members are removed.

\paragraph{Context Retrieval.}
Our default retrieval method is based on semantic similarity. Specifically, we use Qwen3-Embedding-4B~\citep{qwen3embedding} to encode the problem statement of each memory entry into a 2,560-dimensional embedding vector. Given a query, the retriever selects the top-$K$ memory entries with the highest embedding similarity to the query problem. By default, we set $K=8$ for math reasoning, $K=4$ for WebShop, and $K=4$ for SWE. The final context is constructed from the query problem statement, the problem statements of the retrieved memory entries, and their corresponding experiences or lessons.

\paragraph{Inference and Evaluation.}
The constructed context is fed into the LLM for inference. During evaluation, we set the sampling temperature to $1.0$ and generate $m=8$ random outputs. We evaluate the agent under a fixed test-time budget. For standard reasoning tasks such as math reasoning, we impose a maximum output-token budget, and a problem is counted as solved only if the model produces the correct answer within this token limit. For agentic tasks, we instead define the test-time budget as the maximum number of interaction steps allowed between the agent and the environment. We do not impose an additional token budget within each interaction step, since in agentic settings the primary cost typically arises from repeated rounds of action and feedback, rather than from a long reasoning trace within a single step. The metrics introduced in Section~\ref{subsec:setup} are computed by averaging over the $m=8$ sampled outputs.

\paragraph{Concept Tree.}
The concept tree introduced in Section~\ref{sec:concept_tree} is built on top of the lesson-based memory $\{z_i = (x_i, \ell_i)\}_{i=1}^N$. For each memory entry, we first perform \emph{concept description extraction} by prompting the base LLM to produce a short structured description. We use greedy decoding in this stage. These concept descriptions are then encoded into embeddings and used to construct the tree via recursive bisecting $k$-means clustering. For all tasks, we use a two-level hierarchy ($L=2$). The target leaf size is set to $50$ for mathematical reasoning, $20$ for WebShop, and $10$ for SWE, reflecting the different memory sizes in each domain. 

At test time, retrieval is performed at the leaf level. Given a query, we first rank leaves by their affinity to the query embedding and select the top $n_\ell=5$ leaves. We then pool the memory entries contained in these leaves to form a candidate set for LLM re-ranking. In practice, we set a maximum candidate pool size $K_{\mathrm{cand}} = 300$ so that it is typically within $300$ entries, which keeps the re-ranking stage within the LLM context window while still preserving sufficient diversity. Finally, the LLM is prompted with the query and the candidate problem statements, and it is allowed to select a variable number of relevant lessons up to a predefined cap. This design gives the model flexibility to adapt the amount of retrieved context to the different query problems, rather than forcing a fixed number of retrieved lessons for every instance.

\subsection{Prompt Templates} \label{app:prompts}
\begin{tcolorbox}[gray_box, title = {{Prompt Template: Lesson Distillation (Math Reasoning)}}]\footnotesize \label{prompt:lesson_math}

You are a problem-solving strategy synthesizer analyzing problem-solving attempts. Carefully review the problem, and your attempted solutions (can be correct or incorrect). Your goal is to extract a detailed problem-solving strategy from a long reasoning process that can help solve this type of problems.\\

\# Problem Statement: \\
\{problem_statement\} \\

\# Ground Truth Solution: \\
\{gt_solution\} \\

\# Previous Attempts:\\
\{formatted_rollouts\}\\

\# Reasoning Instructions:\\
Extract key insights and reasoning patterns from successful attempts, aggregating different ways to solve the problem.

Reflect Common mistakes, misunderstandings, or reasoning flaws that could lead to incorrect answers (if applicable).

Analyze preconditions and scenarios where the identified strategies are most applicable.

\# Response Format:\\
1. Task Description: A one-two sentence summary of the type of problems this strategy applies to.\\
2. Strategy: A step-by-step detailed problem-solving strategy that could consist of various different ways to tackle similar problems.\\
3. Pitfalls: Common mistakes or misconceptions to avoid when solving this type of problems (if applicable).\\

\# Important: \\
The strategy should be extremely detailed, covering multiple different ways to solve the problem.
\end{tcolorbox}
\begin{tcolorbox}[gray_box, title = {{Prompt Template: Lesson Distillation (WebShop)}}]\footnotesize
\label{prompt:lesson_webshop}
You are a shopping strategy synthesizer analyzing past web shopping attempts. 
Review the shopping task and the previous interaction trajectories (can be successful, partially successful, or unsuccessful based on the reward). Your goal is to extract a detailed reusable shopping strategy from a long reasoning process that can help solve this type of problems.\\

\# Shopping Task:\\
\{task_instruction\}\\

\# Past Shopping Trajectories:\\
\{formatted_trajectories\}\\

\# Instructions:\\
Extract key insights and reasoning patterns from successful attempts (reward = 1.0).

Reflect Common mistakes, misunderstandings, or reasoning flaws that could lead to sub-optimal attempts (reward $<$ 1.0)

Analyze preconditions and scenarios where the identified strategies are most applicable \\

\# Response Format:\\
1. Task Description: A one-two sentence summary of the type of shopping task this strategy applies to.\\
2. Action Workflow: A step-by-step detailed problem-solving strategy to complete the purchase, including step-level actions and the summarization of the observations at each step.\\
3. Pitfalls: Common mistakes or pitfalls to avoid when solving this type of shopping task (if applicable).\\

\# Important:\\
The workflow should include clear action steps (e.g., CLICK, TYPE, SELECT) and observations at each step.
\end{tcolorbox}
\begin{tcolorbox}[gray_box, title = {{Prompt Template: Lesson Distillation (SWE)}}]\footnotesize
\label{prompt:lesson_swe}
You are a debugging strategy synthesizer analyzing software engineering problem-solving attempts. Review the GitHub issue description and the previous resolution trajectories below (can be successful or unsuccessful based on the result). Your goal is to distill a detailed reusable debugging and fixing strategy from a long reasoning process that can help solve this type of problems. \\

\# GitHub Issue:\\
\{issue_text\}\\

\# Past Resolution Trajectories:\\
\{formatted_trajectories\}\\

\# Reasoning Instructions:\\
Extract key insights and reasoning patterns from successful attempts (resolved = True).\\
Reflect on common mistakes or reasoning flaws from failed attempts (resolved = False).\\
Focus on actionable details: concrete file paths, class/function names, grep patterns, test commands, and verification steps.\\

\# Response Format:\\
1. Task Description: One-two sentence summary of the issue type and code base area.\\
2. Strategy: Step-by-step detailed agent workflow structured as an action sequence \\(followed by successful attempts).\\
3. Pitfalls: Common mistakes to avoid. If a failed attempt is available, explain the specific wrong turn and how to avoid it.\\

\# Important: \\
The workflow should include clear action steps and observations at each step.
\end{tcolorbox}
\begin{tcolorbox}[gray_box, title = {{Prompt Template: Experience-based Inference}}]\footnotesize
\label{prompt:inference}
You are an expert at solving complex reasoning problems, while leveraging few-shot experience. You have been given several examples from past experience that may help solve the new target problem. Please solve the problem in an efficient manner by leveraging the past experience. \\
 
\# Reasoning Instructions:\\
\{task_dependent_instruction\} \\

\# Few-shot Experience:\\
\{examples\} \\

\# Target Problem Statement: \\
\{problem_statement\}
\end{tcolorbox}
\begin{tcolorbox}[gray_box, title = {{Prompt Template: Concept Description Extraction (Math Reasoning)}}]\footnotesize
\label{prompt:concept_math}
You are a mathematical concept analyst.  Given a problem statement and a lesson/solution, produce a concise concept description that captures the underlying mathematical topic, the core problem pattern, and the key technique(s) used. \\

Problem: \\
\{problem_statement\} \\

Lesson / Solution: \\
\{lesson\} \\

Respond with EXACTLY three lines (no extra text):\\
Topic: [1-4 word topic, e.g. ``number theory'', ``dynamic programming'']\\
Problem: [1 sentence summarizing the problem pattern]\\
Technique: [1 sentence summarizing the solution technique]
\end{tcolorbox}
\begin{tcolorbox}[gray_box, title = {{Prompt Template: Concept Description Extraction (WebShop)}}]\footnotesize
\label{prompt:concept_webshop}
You are a shopping task analyst.  Given a task instruction and a lesson from an agent's shopping experience, produce a concise concept description that captures the shopping category, the core task pattern, and the key strategy used.\\

Task Instruction:\\
\{problem_statement\} \\

Lesson:\\
\{lesson\} \\

Respond with EXACTLY three lines (no extra text):\\
Topic: [1-4 word shopping/product category, e.g. ``clothing, woman'', ``electronics, laptop'']\\
Problem: [1 sentence summarizing the shopping task]\\
Technique: [1 sentence summarizing the shopping strategy and agentic workflow]
\end{tcolorbox}
\begin{tcolorbox}[gray_box, title = {{Prompt Template: Concept Description Extraction (SWE)}}]\footnotesize
\label{prompt:concept_swe}
You are a software engineering issue analyst.  Given a GitHub issue description and a debugging lesson/solution, produce a concise concept description that captures the repository/library area, the core bug pattern, and the key fix technique used.\\

Issue: \\
\{problem_statement\} \\

Lesson / Solution: \\
\{lesson\}\\

Respond with EXACTLY three lines (no extra text):\\
Topic: [1-4 word topic, e.g. ``Django ORM query'', ``NumPy indexing'', ``pytest fixture''] \\
Problem: [1 sentence summarizing the bug pattern]\\
Technique: [1 sentence summarizing the fix approach]
\end{tcolorbox}
\begin{tcolorbox}[gray_box, title = {{Prompt Template: LLM Re-ranking}}]\footnotesize
\label{prompt:reranking}
You are an expert relevance judge for problem-solving experience retrieval.\\

Given a **query problem** and a numbered list of **candidate experience entries** (each described by its problem statement), select the most relevant candidates based on problem similarity (the same type of problem that shares similar solution strategies).\\

\#\# Query Problem:\\
\{query_problem_statement\}\\

\#\# Candidate Experiences:\\
\{formatted_candidates\}\\

\#\# Selection Guidelines:\\
- You may select as many candidates as you think are relevant, but no more than {max_k}.\\
- Order your selections from most relevant to least relevant.\\
- If for some rare case you find that none of any candidate is relevant, select {top_k} candidates that could be helpful to the query problem.\\

Return ONLY a JSON array of integer indices (0-based), Example: [4, 12, 7, 0, 3, 18, 24, 1, 9, 53]\\
Your response MUST end with the JSON integer array on its own line 
with no surrounding text or labels. Output (JSON integer array only, on the last line):
\end{tcolorbox}
\section{Additional Experiment Results}  \label{app:results}

\subsection{Omitted Results in Section~\ref{sec:concept_tree}}
\label{app:concept_tree_omitted}

\paragraph{Algorithm of Concept Tree Construction and Retrieval.}
\begin{algorithm}[h]
\caption{Hierarchical Concept Tree}
\label{alg:concept_tree}
\tcp{\textbf{Offline: Build Concept Tree}}
\KwIn{
Query-lesson pairs $
\{(x_i,\ell_i)\}_{i=1}^N$ 
tree depth $L$;\; target leaf size $s$;\; number of selected leaves $n_\ell$;\; candidate pool size $K_{\mathrm{cand}}$.
}
\KwOut{Hierarchical concept tree $\Tc$.}

\For{$i=1$ \KwTo $N$}{
    $d_i \leftarrow \pi_\theta(x_i, \ell_i)$ \tcp*{extract concept description}
    $e_i \leftarrow \pi_e(d_i)$ \tcp*{embed concept description}
}
$\mathcal{T} \leftarrow \textsc{BisectingKMeans}\!\big(\{e_i\}_{i=1}^N,\; L,\; s\big)$ \tcp*{build $L$-level concept tree}

\BlankLine
\hrule
\BlankLine
\tcp{\textbf{Online: Retrieve \& Re-rank}}\vspace{.25em}
\KwIn{
Tree $\Tc$; test query $x$.
}
\KwOut{Context $c$ for query $x$.}

$e_x \leftarrow \pi_e(x)$ \tcp*{embed query}

\For{each leaf $\lambda_j$ in $\mathcal{T}$}{
    $a_j(x) \leftarrow \text{Affinity}(e_x,\; \lambda_j)$ \tcp*{compute cosine similarity to leaf nodes}
}

$\Lambda^* \leftarrow \operatorname{Top\text{-}n_\ell}_{\lambda_j}\; a_j(x)$ \tcp*{select top-$n_\ell$ leaves}

$\mathcal{C}_{\mathrm{cand}} \leftarrow \operatorname{Top\text{-}K_{\mathrm{cand}}}\bigl\{z_i : i \in \bigcup_{\lambda_j \in \Lambda^*}\mathrm{members}(\lambda_j)\bigr\}$ \tcp*{pool members of selected leaves}

$\mathcal{R}(x) \leftarrow \pi_\theta\!\big(x,\;\{x_i : z_i \in \mathcal{C}_{\mathrm{cand}}\}\big)$ \tcp*{LLM re-ranking}

$c \leftarrow \mathrm{Concat}\big(x,\;\{(x_i, \ell_i)\}_{i\suchthat z_i \in \mathcal{R}(x)}\big)$\;

\Return $c$\;
\end{algorithm}
We first provide the detailed algorithm of the hierarchical concept tree in Algorithm~\ref{alg:concept_tree}. At a high level, the method consists of an offline construction stage and an online retrieval stage. Offline, each lesson memory entry $(x_i,\ell_i)$ is converted into a short concept description by the base LLM, and these concept descriptions are embedded and organized into a multi-level tree via recursive bisecting $k$-means clustering. Online, given a test query, we first score leaf nodes by their affinity to the query embedding, then select the top-ranked leaves to form a candidate pool, and finally use the LLM to re-rank candidate entries before constructing the final context. This procedure allows retrieval to operate at the level of concept groups rather than individual memory entries, which provides a more structured way to control retrieval relevance and promote diversity beyond flat similarity-based retrieval.

\paragraph{Concept Tree Visualization.}
Figures~\ref{fig:tree_visualize_math_swe}(a) and~\ref{fig:tree_visualize_math_swe}(b) visualize the resulting concept trees on the math reasoning and SWE tasks, respectively. On math reasoning, the root-level nodes correspond to broad mathematical topics such as combinatorics, number theory, geometry, and algebra, while lower-level leaves further specialize into finer-grained problem types and strategies, such as modular arithmetic, linear congruences, cyclic quadrilaterals, or triangle area ratios. On SWE, the tree exhibits a different but equally interpretable structure: high-level nodes often correspond to software libraries, subsystems, or bug families, while leaf nodes capture more concrete repair patterns, such as Django middleware issues, SymPy set operations, Matplotlib configuration bugs, or PyTest setup failures. 
\begin{figure}[h]
\centering
\begin{tabular}{rl}
    \subfloat[Math Reasoning]{%
       \includegraphics[width=1.0\textwidth]{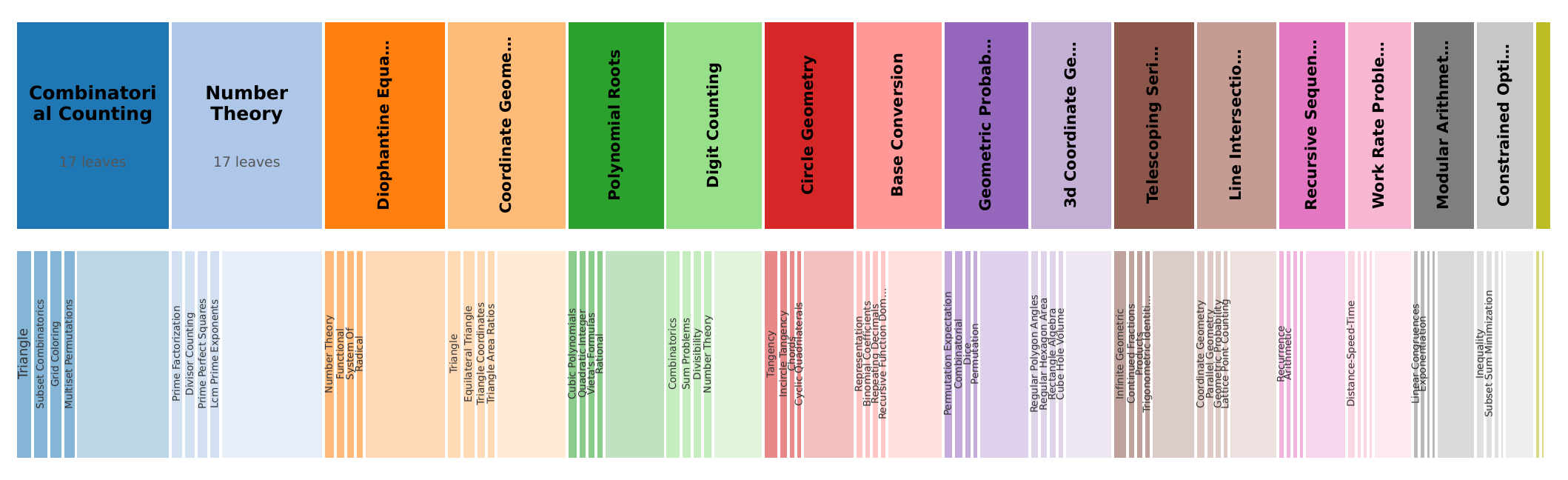}
    } \\
    \subfloat[SWE]{%
       \includegraphics[width=1.0\textwidth]{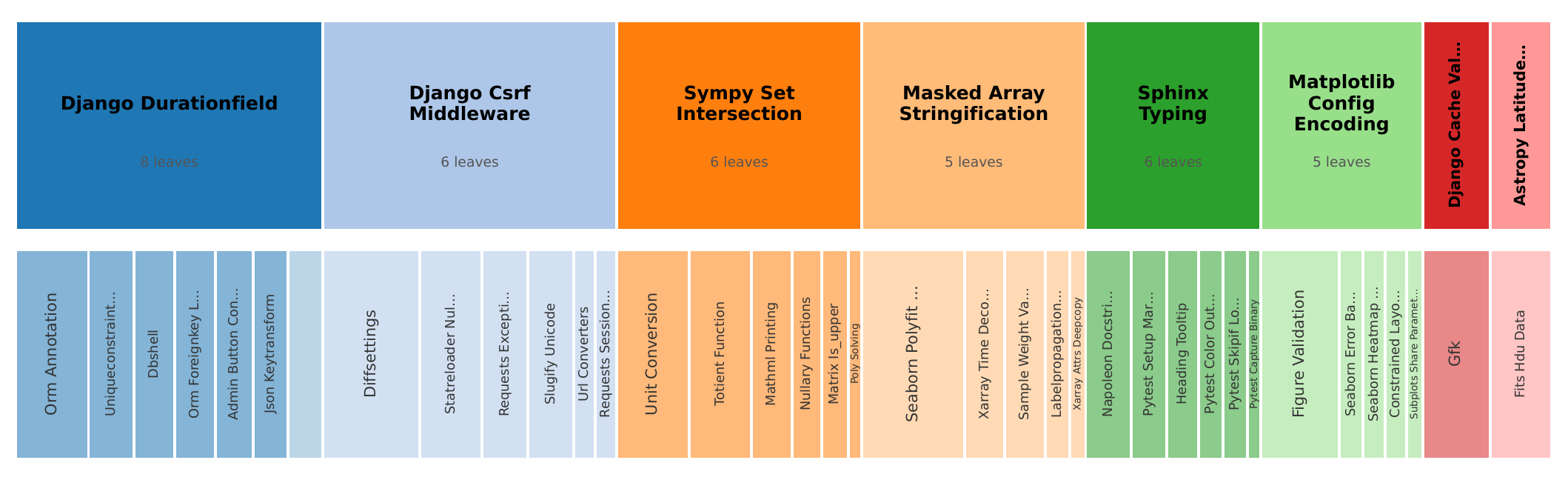}
    }
\end{tabular}
\caption{
\textbf{Visualization of Concept Tree (Math Reasoning \& SWE).} (a) Hierarchical concept tree constructed from 13,381 experience records on the math reasoning task. The tree organizes memories into 17 broad topics and 256 concrete concept groups via two levels of embedding-based clustering, with segment widths proportional to the number of records in each cluster. For visual clarity, only the four largest leaf clusters per root category are shown individually, and the remaining leaves are merged. (b) Hierarchical concept tree constructed from 579 experience records on the SWE task. The tree organizes memories into 8 broad topics and 38 concrete concept groups.}
\label{fig:tree_visualize_math_swe}
\vspace{-0.5em}
\end{figure}

\begin{figure}[h]
\centering
\begin{tabular}{rl}
    \subfloat[Math Reasoning]{%
        \includegraphics[width=0.48\textwidth]{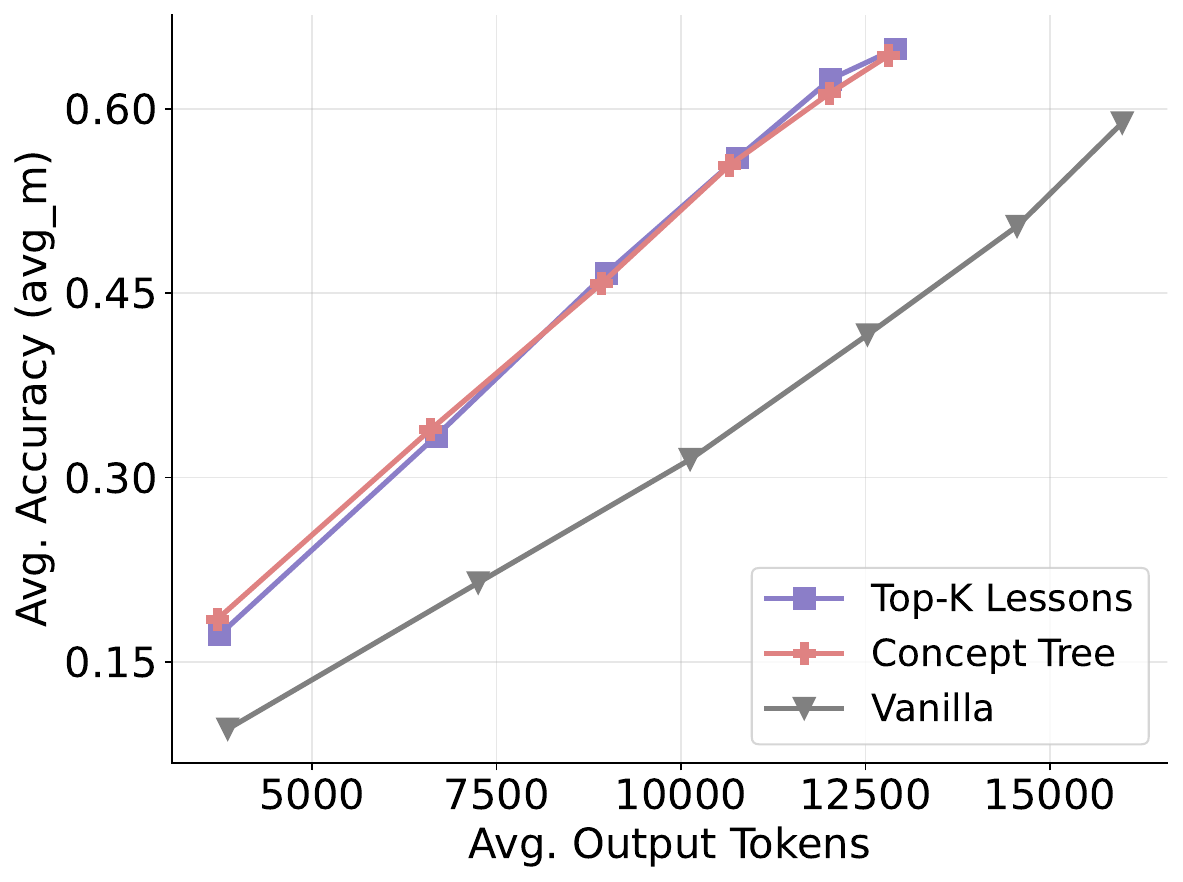}
    } &
    \subfloat[SWE]{%
        \includegraphics[width=0.48\textwidth]{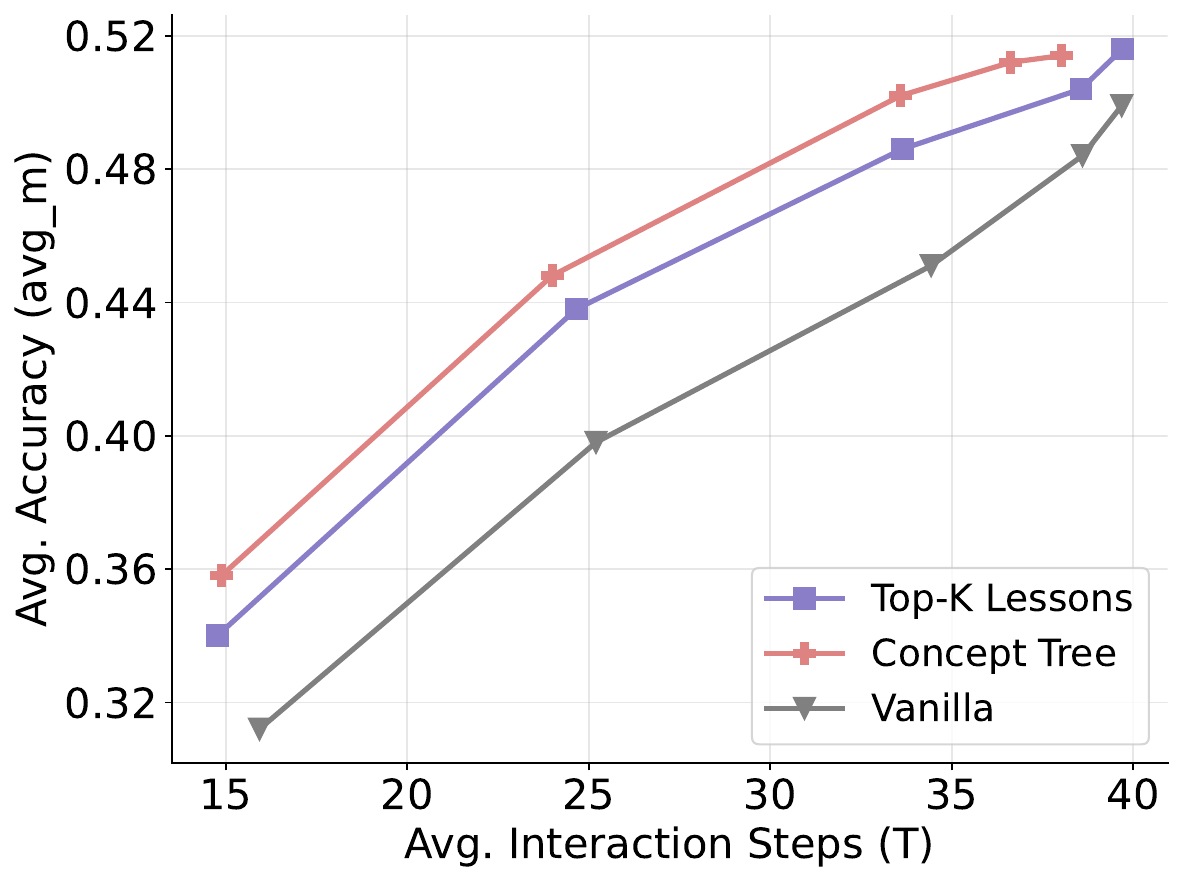}
    }
\end{tabular}
\caption{
\textbf{Hierarchical Concept Tree vs. Lesson Retrieval} Performance comparing the hierarchical concept tree against Top-$K$ lesson retrieval and vanilla baselines.}
\label{fig:concept_tree_math_swe}
\vspace{-0.5em}
\end{figure}

\paragraph{Results on Math Reasoning and SWE Tasks.}
Figure~\ref{fig:concept_tree_math_swe} further compares the hierarchical concept tree with the top-$K$ lesson retrieval baseline on the math reasoning and SWE tasks. On math reasoning, the concept tree does not yield a clear improvement over flat lesson retrieval. One possible explanation is that math problems and their distilled lessons are already relatively clean, diverse, and semantically focused, so standard embedding-based retrieval over a flat lesson memory is often sufficient to identify useful examples. In contrast, on SWE, the concept tree still delivers a noticeable improvement over the baseline, consistent with our observation on the WebShop task (Figure~\ref{fig:concept_tree_webshop}). We hypothesize that this is because experience from agentic tasks often shares deeper conceptual structure, such as reusable workflows, debugging patterns, or web navigation strategies, that may not be fully captured by surface-level semantic similarity alone.

\subsection{Analysis on Other Base LLMs.}
We further conduct the analysis in Section~\ref{sec:scaling_behavior} with GPT-OSS-20B as the base model, using WebShop as a representative task for illustration. The overall observations remain consistent with the main results: lesson distillation continues to outperform raw experience on the agentic task, and memory consolidation still exhibits a non-monotonic scaling behavior with a sweet spot at intermediate memory sizes. These results suggest that the main conclusions of Section~\ref{sec:scaling_behavior} are not specific to Seed-OSS-36B-Instruct, but instead reflect a more general property of experience-augmented agent. In particular, they further support our central claim that effective \emph{experience decoction}, through distilling noisy raw trajectories into reusable lessons and organizing accumulated memory appropriately, is key to improving the scalability and effectiveness of test-time context construction.

\begin{figure}[h]
\centering
\begin{tabular}{rl}
    \subfloat[Raw Experience vs. Distilled Lesson]{%
        \includegraphics[width=0.48\textwidth]{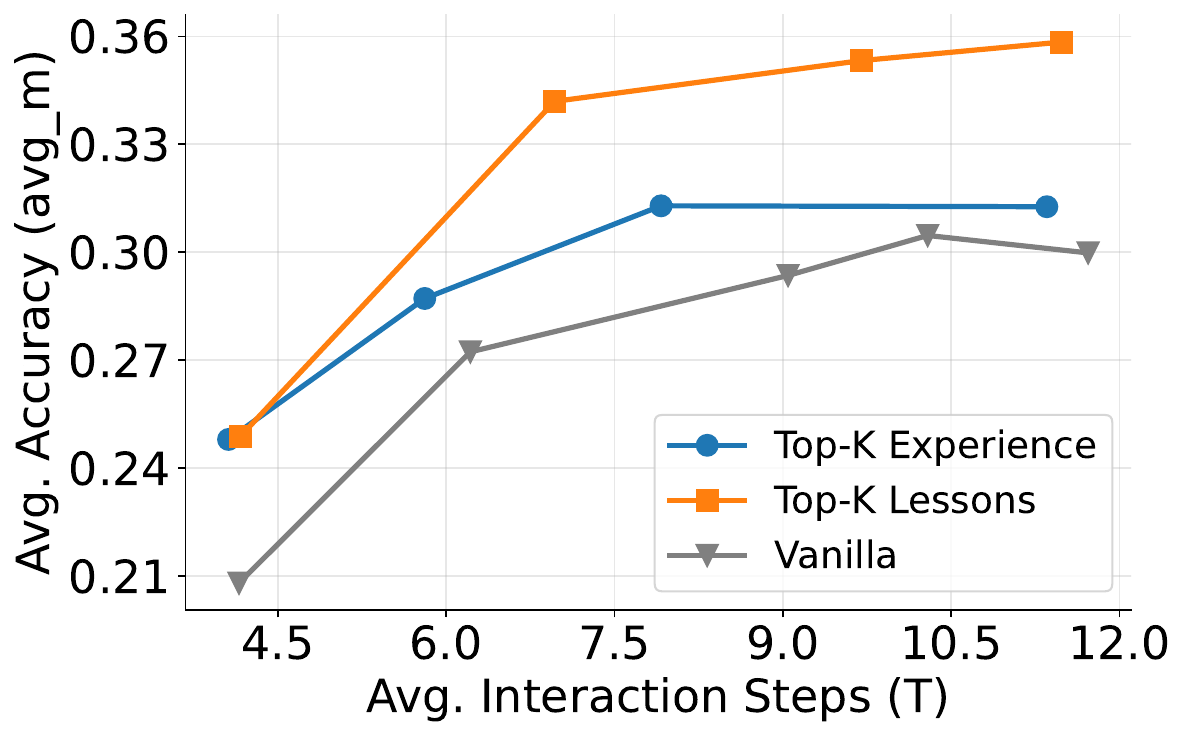}
    } &
    \subfloat[Experience Scaling Behavior via Memory Consolidation]{%
        \includegraphics[width=0.48\textwidth]{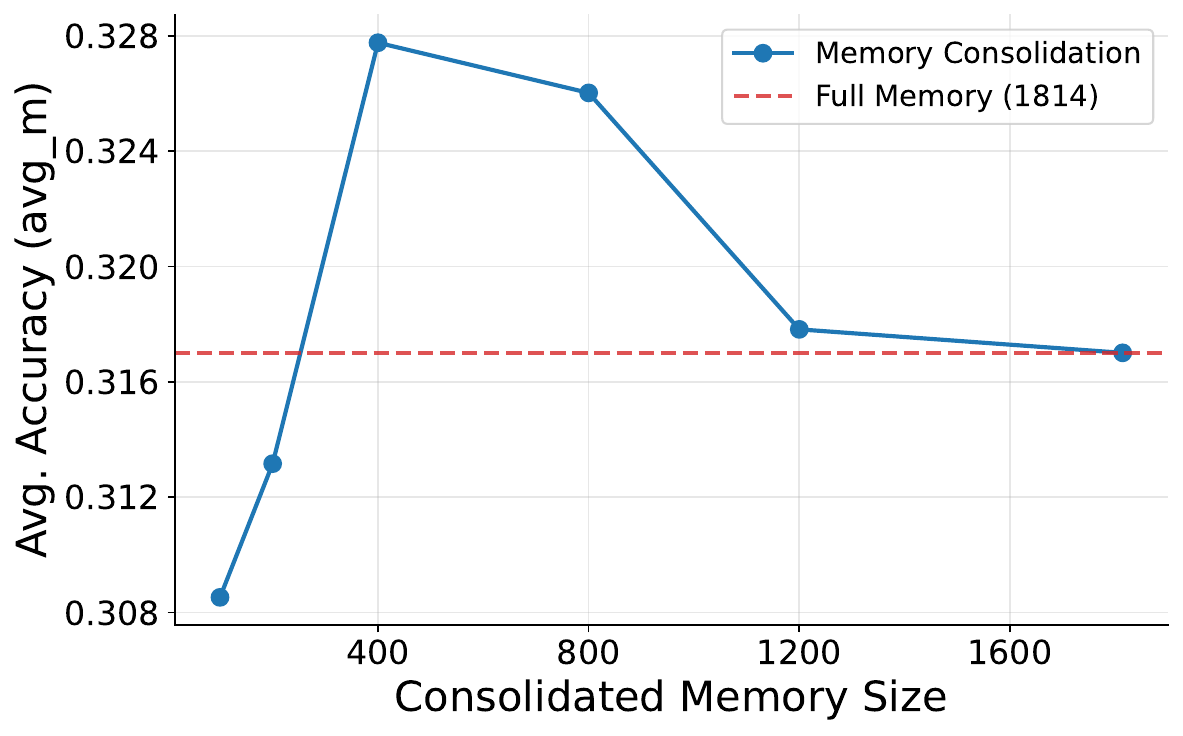}
    }
\end{tabular}
\caption{\textbf{Analysis Results on GPT-OSS-20B (WebShop).} The overall trends are consistent with the main results in Section~\ref{sec:scaling_behavior}. \textbf{(a)} Distilled lessons provide more effective context than raw experience. \textbf{(b)} Memory consolidation shows a sweet spot at intermediate memory sizes.}
\label{fig:results_gpt}
\end{figure}

\end{document}